\documentclass[educ,sglanonrev]{informs-educ}
\RequirePackage{tgtermes}
\RequirePackage{newtxtext}
\RequirePackage{newtxmath}
\RequirePackage{bm}
\RequirePackage{endnotes}

\makeatletter

\def\theARTICLETOP{}

\def\theARTICLEABSTRACT{%
	\HOOKb
	\vspace*{18pt}
	\noindent
	\begin{minipage}[t]{\textwidth}\parindent1em
		\ABSfont
		\noindent\theABSTRACT\endgraf
		\vskip5pt
		\theFUNDING
		\theKEYWORDS
		\theSUBJECTCLASS
		\theAREAOFREVIEW
		\theMSCCLASS
		\theORMSCLASS
		\if@BLINDREV\else\theHISTORY\fi
		\noindent\hrulefill
	\end{minipage}%
	\vspace*{0pt}
}

\makeatother

\RRHSecondLine{\fs.7.9.\rm {\it\theJOURNAL}}
\LRHSecondLine{\fs.7.9.\rm {\it\theJOURNAL}}

\OneAndAHalfSpacedXII 

\usepackage{algorithm}
\usepackage{algpseudocode}
\usepackage{tikz}
\usepackage{hyperref}
\usepackage{enumitem}
\usepackage{booktabs,tabularx,array}

\usepackage[sort&compress]{natbib}
\bibpunct[, ]{[}{]}{,}{n}{}{,}%
\def\bibfont{\small}%

\usepackage{pifont}
\newcommand{\cmark}{\ding{51}}
\newcommand{\xmark}{\ding{55}}

\EquationsNumberedThrough    

\TheoremsNumberedThrough     
\ECRepeatTheorems  %

\newcommand{\bE}{\mathbb{E}}

\newcommand{\bbR}{\mathbb{R}}

\newcommand{\cH}{\mathcal{H}}
\newcommand{\cX}{\mathcal{X}}
\newcommand{\cD}{\mathcal{D}}

\newcommand{\wspo}{W_{\textnormal{SPO}}}
\newcommand{\wdfo}{W^{\textnormal{O}}_{\textnormal{DF}}}
\newcommand{\wdfr}{W^{\textnormal{R}}_{\textnormal{DF}}}
\newcommand{\ld}{\ell_{\mathrm{decision}}}

\MANUSCRIPTNO{EDUC-0001-2026.00}

\begin{document}
	
	
	\RUNAUTHOR{Mo Liu}
	
	\RUNTITLE{Tutorial for Decision-Focused Learning}
	
	\TITLE{Decision-Focused Learning: When and Why Traditional Prediction Models Fail}
	
	\ARTICLEAUTHORS{%
			\AUTHOR{Mo Liu}
			\AFF{Department of Statistics and Operations Research,
				University of North Carolina at Chapel Hill, \EMAIL{mo\_liu@unc.edu}}
			
		} 
		
		\ABSTRACT{%
			Plugging predictions of unknown parameters into downstream optimization problems, often referred to as the ``predict-then-optimize'' paradigm, has long been a standard approach in decision-making under uncertainty. However, improved predictive accuracy does not, in general, translate into improved decision quality. This disconnect has motivated growing interest in decision-focused learning (DFL) within the operations research community.
			This tutorial reviews recent developments in DFL and highlights key methodological insights, with a particular focus on stochastic linear programming as the downstream decision-making problem. We discuss why several widely used tools in traditional statistical learning are not directly suited to decision-focused settings and must be rethought, including (i) data collection strategies driven purely by predictive uncertainty and (ii) distributional distance measures such as the Wasserstein distance. We summarize properties of DFL that distinguish it from conventional predictive modeling and provide insights into the development of new decision-focused tools.
		}%
		
		
		
		
		\KEYWORDS{decision-focused learning, predict-then-optimize, prescriptive analytics, decision-making under uncertainty} 
		
		\maketitle
		
		
		
		\section{Introduction}\label{sec:Intro}
		Decision-making typically involves solving an optimization problem. When the optimization problem contains unknown parameters, decision-making under uncertainty generally follows a two-stage pipeline. First, a statistical model is constructed to estimate the unknown quantities, which serve as parameters of the downstream optimization problem. Second, these estimated parameters are plugged into the optimization model to obtain a decision.
		In the first stage, when contextual data or observable features are available to help predict the uncertainty, the statistical model is typically a predictive model. This ``predict-then-optimize'' paradigm arises in a wide range of OR applications, including routing, inventory control, recommendation, pricing, matching, and healthcare operations.
		Although it is intuitive to expect smaller prediction errors to translate into higher-quality decisions, this intuition does not hold in general, either empirically or theoretically. This mismatch between prediction error and decision quality has motivated \textit{decision-focused learning} (DFL), which incorporates the structure of the downstream decision-making problem into the construction of the prediction model.
		
		This tutorial focuses on the geometric and statistical ideas underlying the mismatch between prediction error and decision loss, and illustrates how to leverage this mismatch to study statistical learning problems, such as data collection and uncertainty quantification. The tutorial is intended for PhD students and researchers who are familiar with either linear programming or statistical learning, and who are interested in research at their intersection. By the end of the tutorial, readers will understand key statistical learning challenges in DFL, gain high-level geometric intuition for analyzing DFL problems, and be introduced to a range of promising research directions.

		Throughout the tutorial, we use the notation summarized in Table~\ref{tab:notation}. Specifically, $x \in \bbR^k$ denotes the observable feature vector, $c \in \bbR^{d_1}$ denotes the unknown coefficient vector entering the downstream optimization problem, $h(\cdot): \bbR^k \to \bbR^{d_1}$ denotes a prediction model, and $w \in \bbR^{d_2}$ denotes the decision vector. We use $w^*(c)$ to denote the optimal decision under the realized parameter vector $c$, where $w^*(\cdot): \bbR^{d_1} \to \bbR^{d_2}$. The exact form of $w^*(c)$ depends on the downstream decision-making objective, which will be specified later. Some symbols are introduced in later sections, but we include them here in Table \ref{tab:notation} for reference.
		
		\begin{table}[htbp]
			\centering
			\caption{Summary of notation}
			\label{tab:notation}
			\small
			\renewcommand{\arraystretch}{1.12}
			\begin{tabular}{p{0.2\linewidth}p{0.76\linewidth}}
				\hline
				\textbf{Notation} & \textbf{Meaning} \\
				\hline
				$x\in\cX\subseteq\bbR^k$ & Observable feature vector or contextual information. \\
				$c\in\bbR^{d_1}$ & Realized uncertain coefficient vector entering the downstream optimization problem, such as a cost vector or a demand vector. \\
				$\hat c=h(x)$ & Prediction of the uncertain coefficient vector produced by model $h$. \\
				$\cH$ & Hypothesis class of prediction models. \\
				$w\in\bbR^{d_2}$ & Downstream decision vector.  \\
				$S\subseteq\bbR^{d_2}$ & Known bounded feasible region of the downstream optimization problem. \\
				$w^*(c)$ & Optimal decision induced by a plug-in coefficient vector $c$, with a fixed tie-breaking rule when multiple optimizers exist. \\
				$z(c)$ & Optimal value of the linear problem, $z(c):=\min_{w\in S}c^\top w$. \\
				$\ld(\hat c,c)$ & Per-instance decision loss, or SPO loss, defined as $c^\top w^*(\hat c)-c^\top w^*(c)$. \\
				$R_\ell(h)$, $R(h)$ & Population risk under a generic loss $\ell$; $R(h)$ denotes the decision risk under $\ld$. \\
				$\cD$ & Data-generating distribution of $(x,c)$. \\
				$w_{(j)}$, $N_j$ & Extreme points of $S$ and their associated normal cones in the cost space. \\
				$\operatorname{diam}(S)$ & Diameter of the feasible region under the chosen norm. \\
				\hline
			\end{tabular}
		\end{table}

		\subsection{Examples where traditional statistical learning fails}

		Consider a shortest path problem with two routes in Figure \ref{example:shortest}: one route has a fixed cost $1$, whereas the other has a feature-dependent random cost $Y$. The route choice changes only when the expected cost of the uncertain route crosses the threshold $1$. The collected data pairs consisting of the cost $Y$ and the feature variable $X$ (e.g., weather and travel conditions) are shown in the right panel of Figure~\ref{example:shortest}. 
		
		\begin{figure}[ht]
			\centering
			
			\begin{minipage}{0.3\textwidth}
				\centering
				\begin{tikzpicture}[>=stealth, thick]
					
					\node[circle, draw, minimum size=1cm] (L) at (0,0) {};
					\node[circle, draw, minimum size=1cm] (R) at (4,0) {};
					
					\draw[->] (0.5,0.6) 
					.. controls (2,1) .. 
					node[midway, below] {1} 
					(3.5,0.6);
					
					\draw[->] (0.5,-0.6) 
					.. controls (2,-1) .. 
					node[midway, below] {$Y=?$} 
					(3.5,-0.6);
					
				\end{tikzpicture}
			\end{minipage}
			\hfill
			\begin{minipage}{0.68\textwidth}
				\centering
				\includegraphics[width=\linewidth]{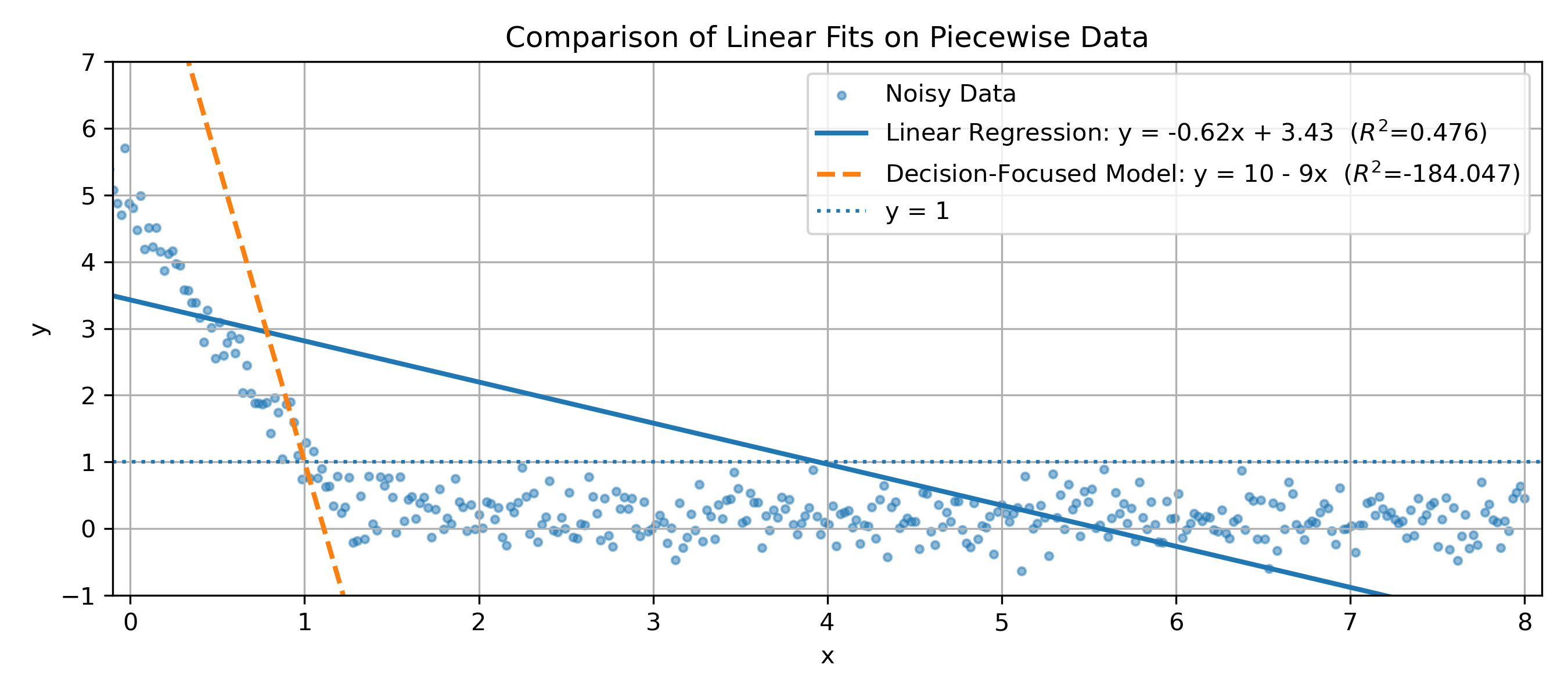}
			\end{minipage}
			
			\caption{A shortest-path toy example showing that better prediction fit need not imply better decisions.}
			\label{example:shortest}
		\end{figure}
		
		Figure \ref{example:shortest} can be used to illustrate three cases in which DFL is necessary.
		
		\begin{itemize}
			\item \textbf{Smaller prediction error does not imply better decisions.}  
			In the right panel of Figure~\ref{example:shortest}, the blue least-squares fit achieves a much higher \(R^2\). However, the induced decision rule is suboptimal: it suggests choosing the upper route when \(x<4\), whereas the true threshold is around \(x<1\). In contrast, the dashed orange fit, despite having worse prediction accuracy, places the threshold correctly and therefore yields near-optimal decisions.
			
			\item \textbf{Uniform data collection can be inefficient.} When collecting data, we should focus on the region $x \in [0,2]$, rather than the entire space $[0,8]$. Data with $x > 4$ primarily improve prediction accuracy on $Y$, while the optimal decision is already clear. In contrast, observations of $Y$ near $x = 1$ are much more informative, as they directly reduce decision errors. This suggests that data collection should prioritize decision-relevant regions rather than uniformly covering the feature space.
			
			\item \textbf{Decision-relevant distributional differences require new metrics.}
			Quantifying distribution shifts or discrepancies is important for clustering, kernel regression, and decision robustness.
			Consider modifying the data distribution by shifting $y$ upward by 5 when $x<1$.
			Although this modification creates a large discrepancy under standard metrics such as KL divergence or Wasserstein distance, the optimal decision rule remains unchanged.
			Hence, geometric distances between distributions may fail to capture the differences that matter for decision-making.
		\end{itemize}
		
		These examples highlight a central theme of this tutorial: classical statistical learning tools, such as model training, data collection, and uncertainty quantification, should be reexamined from a decision-focused perspective. We study these questions in a broader class of decision-making problems, such as contextual linear programming.
		
		\subsection{Formulation and setup}\label{sec:formulation}
		
		The tractability of DFL depends on the structure of the downstream decision-making problem and how uncertainty enters the model. When the downstream problem has a nonlinear objective function or an uncertain feasible region, the statistical analysis of DFL becomes nontrivial. These challenges will be discussed in detail in Sections \ref{sec:hard_nonlinear} and \ref{sec:hard_rhs}. 
		
		We begin with a common setting studied in \citet{elmachtoub2022smart}. A more general setting and formulation will be discussed in Section \ref{sec:linear}.
		Suppose that we do not observe the true cost vector $c$ directly, but instead observe a feature vector $x \in \cX$ that can be used to predict $c$.
		A prediction model $h \in \cH$ outputs $\hat{c} = h(x)$, and the downstream decision is obtained by solving
		\begin{align}
			w^*(\hat c) \in \argmin_{w \in S} \ \hat c^\top w, \qquad S := \{w \in \bbR^d : Aw \le b\}. \label{eq:plugin-lp}
		\end{align}
		
		The matrix $A$ and the right-hand-side (RHS) vector $b$ are known parameters that define the constraints. The formulation in \eqref{eq:plugin-lp} has wide applications in OR, including \textit{shortest path, bipartite matching, sorting and ranking, top-$k$ selection, and max-flow/min-cut problems}.
		
		If the conditional distribution of $c$ given $x$ were known, the Bayes-optimal decision would be obtained by solving
		\begin{align}
			w^*(x) \in \argmin_{w\in S} \ \bE[c^\top w\mid x]
			= \argmin_{w\in S} \ \bE[c\mid x]^\top w. \label{eq:bayes-lp}
		\end{align}
		
		Equation~\eqref{eq:bayes-lp} reveals a key simplification of stochastic linear optimization: because the objective is linear in $c$, the entire conditional distribution enters only through the conditional mean $\bE[c \mid x]$. This implies that an accurate point estimate of the conditional mean suffices to yield optimal decisions that minimize the risk for a given feature $x$. This property later helps explain why stochastic linear programs are statistically more tractable than problems with nonlinear objectives or uncertain feasible regions.

		Given a hypothesis class of prediction models $\mathcal{H}$, statistical learning aims to identify the best model $h \in \mathcal{H}$ using data. Evaluating predictive performance of a prediction model $h$ requires comparing the predicted value $\hat{c} \gets h(x)$ with the true value $c$ under a loss function $\ell(\cdot,\cdot): \mathbb{R}^{d_1} \times \mathbb{R}^{d_1} \to \mathbb{R}$. The most common prediction-focused loss is the squared error,
		$\ell_2^2(h(x),c) = \|h(x)-c\|_2^2$.
		This choice is natural when the goal is accurate estimation of $\bE[c \mid x]$ itself.
		
		In predict-then-optimize settings, however, the final performance is determined not by $h(x)$ directly but by the decision induced by $h(x)$. To capture this, \citet{elmachtoub2022smart} introduced a decision-focused loss. Given a sample $(x,c)$, the per-instance decision loss is defined as
		\begin{align}
			\ell_{\mathrm{decision}}(h(x),c)
			:= c^\top w^*(h(x)) - c^\top w^*(c). \label{eq:spo-loss}
		\end{align}
		This loss, $\ell_{\mathrm{decision}}(h(x),c)$, is referred to as the \textit{Smart Predict-then-Optimize} (SPO) loss. In \eqref{eq:spo-loss}, the first term represents the realized cost of the decision $w^*(h(x))$ under the true cost vector $c$, while the second term corresponds to the minimum cost achievable if the true cost vector $c$ were known in advance.

		Suppose that $(x,c)$ are sampled from a fixed but unknown distribution $\cD$. The corresponding population risk under a loss function $\ell$ is defined as
		\begin{align}\label{equ:risk_expectation}
			R_\ell(h) := \bE_{(x,c)\sim\cD}[\ell(h(x),c)]. 
		\end{align}
		
		In DFL, we evaluate prediction models using the decision loss in $\eqref{eq:spo-loss}$. For simplicity, we denote the corresponding risk $R_{\ell_{\mathrm{decision}}}(h)$ in $\eqref{equ:risk_expectation}$ by $R(h)$. The goal of DFL is therefore to select a prediction model $h \in \cH$ that minimizes $R(h)$.
		
		Given a training set $\{(x_i,c_i)\}_{i=1}^n$, a straightforward DFL approach is to directly minimize the empirical decision risk:
		\begin{align}
			\min_{h \in \cH} \ \frac{1}{n}\sum_{i=1}^n \ld(h(x_i),c_i).
			\label{equ:empirical}
		\end{align}
		
		When the true cost vector $c_i$ is fully observed, the second term in \eqref{eq:spo-loss}, $c_i^\top w^*(c_i)$, is independent of the prediction $h(x_i)$. Consequently, it can be omitted from the optimization problem in \eqref{equ:empirical}, which reduces to
		\begin{align}
			\min_{h \in \cH} \ \frac{1}{n}\sum_{i=1}^n c_i^\top w^*(h(x_i)).
			\label{equ:reduce}
		\end{align}
		
		Note that this reduction is valid only when $c_i$ is fully observed. If the true cost vector is only partially observed, lies within an uncertainty set, or is chosen adversarially as a function of the decision, then the prediction model $h$ may depend on the choice of $c_i$. In such cases, we must work with the original decision loss $\ld$.

		When minimizing \eqref{equ:empirical} or \eqref{equ:reduce}, the resulting optimization problem remains computationally challenging. These challenges arise from the three steps shown in Figure~\ref{fig:flow}:
		
		\begin{itemize}
			\item Step (i): The prediction model (e.g., a neural network) may be nonconvex or nondifferentiable;
			\item Step (ii): The optimization solver $w^*$ may be computationally expensive, especially when the numbers of variables and constraints are large, and the mapping $h(x) \mapsto w^*(h(x))$ may be discontinuous and nondifferentiable;
			\item Step (iii): The true cost vector $c$ may not be fully observable.
		\end{itemize}
		Section~\ref{sec:review} briefly reviews these challenges, along with the main computational approaches in the literature for minimizing the decision loss in \eqref{equ:empirical} or its reduced form in \eqref{equ:reduce}.
		\begin{figure}[ht]
			\centering
			\begin{tikzpicture}[>=stealth, thick]
				\node (x) at (0,0) {$x$};
				\node (hx) at (3,0) {$h(x)$};
				\node (whx) at (6,0) {$w^*\big(h(x)\big)$};
				\node (final) at (10,0) {$c^\top w^*\big(h(x)\big)$};
				
				\draw[->] (x) -- node[midway, above] {$h$} (hx);
				\draw[->] (x) -- node[midway, below] {(i)} (hx);
				\draw[->] (hx) -- node[midway, above] {$w^*$} (whx);
				\draw[->] (hx) -- node[midway, below] {(ii)} (whx);
				\draw[->] (whx) -- node[midway, above] {$c$} (final);
				\draw[->] (whx) -- node[midway, below] {(iii)} (final);
			\end{tikzpicture}
			\caption{Three steps of evaluating the decision loss}
			\label{fig:flow}
		\end{figure}
		
		Because of these computational challenges, minimizing prediction error remains appealing in practice. The main focus of this tutorial is therefore to understand when the minimizer of the decision risk, $\min_{h\in \cH} R(h)$, differs from the minimizer of the prediction risk, $\min_{h\in \cH} R_{\ell_2^2}(h)$, and how this mismatch can be exploited to address statistical learning questions in DFL.

		\paragraph{Roadmap.}
		The remainder of the tutorial is organized as follows. Section~\ref{sec:review} briefly reviews computational methods for decision-focused learning and explains why directly optimizing decision loss can be difficult. Section~\ref{sec:linear} focuses on stochastic linear programming, first contrasting it with nonlinear objectives and stochastic feasible regions, and then summarizing the geometry and risk properties of the decision loss. Section~\ref{sec:datacollection} uses this geometry to revisit data collection and explains why decision-relevant samples are often concentrated near cone boundaries. Section~\ref{sec:distance} turns to uncertainty quantification and distributional comparison, showing why classical distances such as KL divergence and Wasserstein distance can be decision-blind and introducing a decision-focused alternative. The final sections discuss open statistical learning questions in DFL and conclude.
		
		\section{Computational Methods for DFL}\label{sec:review}
		
		In this section, we briefly review computational methods for DFL. DFL can be viewed as an approach to solving contextual stochastic optimization problems. In the literature, related terms include \textit{task-based learning}, \textit{decision-aware learning}, \textit{end-to-end learning}, \textit{operational statistics}, \textit{predict+optimize}, and \textit{smart predict-then-optimize} (\citet{honguyen2022risk,donti2017task,chu2008solving,feng2025contextual,bertsimas2020predictive,elmachtoub2022smart}). See the forthcoming textbook by \citet{gupta2026endtoendreader} for a more detailed discussion.
		
		The computational literature on DFL has grown rapidly. We refer readers to the recent surveys by \citet{mandi2024survey} and \citet{sadana2025survey} for comprehensive overviews. Here, we highlight several representative approaches to illustrate that computational challenges remain central to DFL, before turning to the statistical analysis in Sections~\ref{sec:linear}, \ref{sec:datacollection}, and \ref{sec:distance}.

		\subsection{Decision loss as the training objective: benefits and limitations}

		In DFL, the most straightforward approach is to minimize the decision loss in \eqref{equ:empirical} or \eqref{equ:reduce}, or a generalized version of these objectives in contextual stochastic optimization.
		This empirical risk is attractive because it naturally integrates the decision loss into the training process.  In particular, this approach can outperform prediction-focused training when the hypothesis class is misspecified, meaning that it does not contain the true underlying model that generates the data (e.g., using linear regression to fit a nonlinear trend).
		
		However, this integrated training approach has limitations from both computational and statistical perspectives. Computationally, as shown in Figure~\ref{fig:flow}, all three steps can be intractable because of nonconvexity or unobservability. By the chain rule, direct gradient descent may thereby be inefficient. Statistically, the second and third steps in Figure~\ref{fig:flow} involve mappings from the coefficient space to the decision space and objective values, which filter out much of the information contained in the predictions or observed data; see Section~\ref{sec:property} for details. Although this loss of information helps focus learning on decision quality in the misspecified case, it can make the learning process less efficient than prediction-focused approaches when the hypothesis class is well specified, that is, when the true underlying model lies within the hypothesis class. This slower statistical convergence rate, reflected in a larger out-of-sample risk bound, is studied in papers such as \citet{hu2022fast,elmachtoub2023stochasticdom,lan2025bias,elmachtoub2025dissecting,hu2025contextual}. 
		
		In light of these challenges, Section~\ref{sec:review_next} reviews several computational approaches to DFL.

		\subsection{Reviews of other computational methods}\label{sec:review_next}
		
		We review computational methods for DFL from the following four perspectives that are related to OR problems, and again refer readers to \citet{mandi2024survey} for a more detailed review.
		
		\paragraph{Surrogate loss approaches.} When the integrated loss is differentiable, for example, when the downstream decision-making problem is unconstrained or smoothly parameterized or regularized, gradient-based methods can be applied; see, for example, \citet{amos2017optnet,agrawal2019differentiable}. When the decision loss is nondifferentiable, for instance, because the optimal decisions $w^*$ may jump between extreme points in linear programming, researchers often use computationally tractable surrogate losses during training. Representative examples include the SPO+ loss of \citet{elmachtoub2022smart}, the risk-calibrated losses studied by \citet{honguyen2022risk}, the perturbation-gradient (PG) losses of \citet{huang2024directional}, the LAVA loss of \citet{berden2025solver}, the WISE loss of \citet{wan2026solverfree}, and the PEAR loss of \citet{lee2026decisionfocused}. In practice, the \textit{PyEPO} library developed by \citet{tang2022pyepo} provides a range of common surrogate losses and computational approaches that are readily implemented on a variety of datasets. A recent work, \citet{schneider2026soft} considers a radial projection method to address the zero gradient issue in neural network training.

		Instead of predicting every primitive uncertain quantity, the model can target a statistic or representation that is sufficient, or nearly sufficient, for decision-making. In stochastic linear optimization, the conditional mean $\bE[c \mid x]$ is already a decision-relevant target. In more structured applications, the relevant object may be a threshold, a quantile, or another low-dimensional summary. This viewpoint is especially useful when the raw parameter vector is high-dimensional, but the optimal policy depends only on a small subset of directions. Decision-relevant quantities can be grouped into the following three categories.
		
		\paragraph{Mapping $x\mapsto w^*$: End-to-end prediction of optimal decisions.}
		A natural simplification is to bypass the optimization layer at deployment and predict the action $w$ directly. This can be done through imitation learning, policy learning, or structured prediction of feasible decisions. For instance, \citet{wilder2019end,qi2023practical,liu2026inventory} train neural networks to output decisions, such as graph optimization solutions or replenishment decisions, directly from contextual information and historical data. Despite strong empirical performance, the theoretical guarantees depend heavily on the training loss functions and their alignment with the decision loss.
		
		\paragraph{Mapping $x\mapsto \ell(\cdot,\cdot)$: prediction of decision loss functions.}
		
		Rather than predicting the uncertain coefficient vector or the optimal decision directly, another line of work learns a surrogate loss that approximates the downstream decision regret and then trains the prediction model using this learned objective. Along this direction, \citet{wang2020compact} learn a compact, low-dimensional surrogate optimization layer, while \citet{shah2022decision} propose locally optimized decision losses (LODLs), which fit instance-specific convex losses using decision-loss evaluations from a black-box optimization oracle. Extending this idea, \citet{shah2024leaving} introduce efficient global losses (EGLs), which learn a feature-dependent map from contexts to loss parameters, allowing loss information to be shared across instances.
		
		\paragraph{Mapping $c\mapsto w^*(c)$: Prediction of optimization oracles.}
		Instead of solving the exact downstream problem at every training step, the learner can train a fast neural solver or meta-optimizer to emulate the mapping from problem parameters to near-optimal feasible decisions. Recent work by \citet{kotary2023predict,cristian2025metaopt} illustrates this approach. The promise is substantial computational savings, especially when the same optimization family is solved repeatedly during training. The main caveat is that approximation error in the learned oracle feeds back into the training objective, so theoretical guarantees must control both optimization error and statistical error.
		
		Despite the wide range of computational methods developed for diverse decision-making problems and forms of uncertainty, statistical guarantees for DFL remain available only for a much smaller class of problems. These challenges and results are discussed in Section~\ref{sec:linear}.
		
		\section{Decision-Focused Learning for Stochastic Linear Programming}\label{sec:linear}
		
		This section explains why stochastic linear programs with uncertain objective coefficients have become the canonical setting for statistical analyses of DFL. Let's go back to the linear formulation introduced in Section \ref{sec:formulation}, and discuss why statistical analysis becomes increasingly difficult when we have nonlinear objective functions or uncertain constraints. The following two sections illustrate why predicting the conditional mean of uncertainty does not minimize the decision risk $R(h)$ when the objective is nonlinear or when the uncertainty is in the feasible region. Since the classical newsvendor problem admits two equivalent formulations, one with a piecewise-linear objective and one with an uncertain right-hand-side, we use it repeatedly in Sections~\ref{sec:hard_nonlinear} and~\ref{sec:hard_rhs} to illustrate these challenges.

		\subsection{Statistical challenge for nonlinear objective}\label{sec:hard_nonlinear}
		
		Recall that for a linear objective, the decision is obtained by \eqref{eq:bayes-lp}, i.e., $
		w^*(x) \in \argmin_{w\in S} \ \bE[c^\top w\mid x]
		= \argmin_{w\in S} \ \bE[c\mid x]^\top w$.
		Suppose we generalize the objective function from $c^\top w$ to nonlinear function $\phi(\cdot,\cdot):\bbR^{d_2}\times \bbR^{d_1}\mapsto \bbR$, then the optimal decision should be obtained by
		\begin{align}
			w^*(x) \in \argmin_{w\in S} \bE[\phi(w,c)\mid x]. \label{eq:nonlinear-bayes}
		\end{align}
		This nonlinear objective can be motivated by the newsvendor cost, e.g., 
		\begin{align}\label{equ:newsvendor_cost}
			\phi(w,c) = \sum_{l=1}^{d} o_l ( w_l - c_l)^+ + b_l (c_l - w_l)^+,
		\end{align}
		where $o_l$ and $b_l$ denote the unit overstock and stockout costs, respectively, which may vary across products $l$.
		Another example of a nonlinear $\phi$ is the mean-variance portfolio optimization.
		The mean-variance portfolio optimization problem aims to maximize expected return while controlling risk. Suppose there are $d$ possible assets to invest in, and let $c\in\mathbb{R}^d$ denote the random return vector of the assets. The portfolio $w\in\mathbb{R}^d$ represents the percentage of total capital invested in each asset. The mean-variance objective can be written as
		\begin{align*}
			\min_{w\in S,\, w_0\in\mathbb{R}}
			\quad
			\mathbb{E}\left[
			-\rho c^\top w
			+
			\frac{\delta}{2}\left(c^\top w-w_0\right)^2
			\right],
		\end{align*}
		where $w_0$ is an auxiliary decision variable, $\rho>0$ and $\delta>0$ balance expected return and risk, and $S$ denotes the feasible set of portfolios.

		For nonlinear objectives, since $\bE[\phi(w,c)] \neq \phi(w,\bE[c])$, predicting the conditional mean generally does not lead to the optimal decision. Consequently, a statistically consistent approach is to estimate the full conditional distribution of $c$ and minimize the expected cost $\bE[\phi(w,c)]$ directly. This estimation task can be simplified when the distribution is parameterized by some parameter $\theta \in \Theta$, that is, when the data pair $(x,c)$ is drawn from some distribution $\cD_\theta$. In this case, the integrated estimation-and-optimization approach studied by \citet{elmachtoub2023stochasticdom,qi2025integrated} can be written as
		\begin{align*}
			\min_{\theta \in \Theta} \frac{1}{n} \sum_{i=1}^n \phi(w_\theta,c_i),
		\end{align*}
		where the optimization oracle $w_\theta$ is defined by
		\begin{align*}
			w_\theta \in \arg\min_{w \in S} \left\{ v(w,\theta) := \mathbb{E}_\theta[\phi(w,c)] \right\}.
		\end{align*}
		
		Although the integrated approach above is statistically consistent, in the sense that it converges to the best in-class parameter $\theta$, estimating the full distribution and minimizing $\mathbb{E}_\theta[\phi(w,c)]$ can be computationally intractable. To address this issue, we note that in some special problems, a point prediction of the uncertainty is sufficient for optimal decision-making. A classical example is the newsvendor problem, in which the optimal order quantity is given by the conditional quantile; see Example \ref{example:newsvendor} for details. Such point predictions can significantly simplify the learning stage; see \citet{ban2019big}, for example.

		When a decision-corrected point forecast exists, \citet{homem2024forecasting} study how to generate such forecasts. In general, however, one cannot expect a decision-corrected point prediction to exist for nonlinear objectives. For example, if the uncertainty is represented by a $d_1$-dimensional vector, whereas the decision lies in a $d_2$-dimensional space with $d_2 > d_1$, then a point prediction in $\mathbb{R}^{d_1}$ will generally not be sufficient to determine the optimal decision in $\mathbb{R}^{d_2}$.
		
		A recent study by \citet{er2025bias} is the first to consider necessary and sufficient conditions for the existence of a decision-corrected point estimate. They study this question in two related settings: a two-stage multi-item newsvendor problem and capacity design for a multi-period service system. \citet{er2025bias} show that the traditional fluid approximation, that is, using time-varying demand as the time-varying Poisson arrival rate, is decision-biased, and they propose conditions for checking the existence of a decision-corrected arrival rate.

		\subsection{Statistical challenge for stochastic feasible region}\label{sec:hard_rhs}
		
		Recall that in linear programming in \eqref{eq:plugin-lp}, the feasible region is formed by constraints $Aw\le b$. When matrix $A$ or RHS $b$ is random, in most OR problems, we cannot simply use $\bE[A]w\le \bE[b]$, as we need to guarantee the feasibility in each possible scenario or use a chance-constrained version. To illustrate this point, we return to the newsvendor problem in \eqref{equ:newsvendor_cost}.
		
		\begin{example}[Alternative formulation of the newsvendor problem]\label{example:newsvendor}
			Consider a simple system with a single customer class, a single resource pool, and a single period. Let $c$ be the ordering cost, and $p$ be the unit lost sales cost. To avoid triviality, suppose $c<p$; otherwise, purchasing no item would be optimal. Let $w$ be the order quantity, and then the traditional newsvendor problem can be written as the following two-stage problem: The first stage minimizes the ordering cost and the expected lost sales cost:
			\[
			\min_{w}: c w + \bE[\pi^*(w,D)].
			\]
			Given random demand $D$, the second-stage problem of minimizing the lost sales (or maximizing the satisfied demand) is
			\[
			\pi^*(w,D):=\;\min_{x\ge0}\; p\,(D-x)\quad\text{s.t.}\quad x\le D,~ x\le w,
			\]
			where $x$ denotes the number of sold items.
			The optimal recourse is $x^*(w,D)=\min(D,w)$, hence $\pi^*(w,D)=p(D-w)^+$. The first-stage problem is
			\[
			\min_{w\ge0}\;\; c\,w\;+\;\bE\!\left[p\,(D-w)^+\right].
			\]
			Denoting $F$ as the cumulative distribution function (CDF) of the demand $D$, for continuous $F$, the objective is convex and the first-order optimality gives $w^*=
			F^{-1}\!\big(\tfrac{p-c}{p}\big)$.
			\hfill\Halmos
		\end{example}

		Example~\ref{example:newsvendor} shows that an uncertain feasible region usually requires a two-stage formulation, rather than replacing both sides of the constraint by their expectations. From the newsvendor example above, we observe that when the right-hand-side is random, the conditional mean $\bE[D]$ is generally not the optimal point prediction. Instead, a certain conditional quantile serves as the decision-corrected point prediction.
		
		This distinction in DFL between uncertainty in the objective vector $c$ and uncertainty in the right-hand-side vector $b$ may appear counterintuitive from the perspective of strong duality. When the right-hand-side $\tilde b$ is stochastic, dualizing the primal recourse problem seems to move the uncertainty from the constraints into the objective of the dual problem:
		\[
		\begin{minipage}[c]{0.45\textwidth}
			\centering
			\[
			\raisebox{0.9em}{$\text{(Primal)}$}
			\qquad
			\begin{aligned}
				\min_{w} \quad & c^\top w \\
				\text{s.t.} \quad & Aw \le \tilde{b}
			\end{aligned}
			\]
		\end{minipage}
		\hfill
		\begin{minipage}[c]{0.45\textwidth}
			\centering
			\[
			\raisebox{1.2em}{$\text{(Dual)}$}
			\qquad
			\begin{aligned}
				\max_{y} \quad & -\tilde{b}^\top y \\
				\text{s.t.} \quad & A^\top y = -c, \\
				& y \ge 0
			\end{aligned}
			\]
		\end{minipage}
		\]
		
		Since the dual problem appears to fit the canonical DFL form in \eqref{eq:bayes-lp}, this distinction may seem puzzling at first glance. In fact, this misunderstanding is referred to as the \textit{``pitfall of stochastic right-hand-sides''} in \citet{er2025bias}, and is illustrated below.
		
		In particular, consider the following general two-stage optimization problem. We use $w^{\text{1st}}$ to denote the first-stage decision, and $w^{\text{2nd}}$ to denote the second-stage decision made after the uncertainty is observed. The first-stage cost is given by a deterministic convex function $f(w^{\text{1st}})$, while $g(w^{\text{1st}}, w^{\text{2nd}})$ denotes the second-stage cost, which depends on the realized uncertainty:
		\begin{align*}
			\min_{w^{\text{1st}}} \left\{ f(w^{\text{1st}}) + \mathbb{E}\left[\min_{w^{\text{2nd}}} g(w^{\text{1st}}, w^{\text{2nd}})\right]\right\}.
		\end{align*}

		Let \(A_1\in\mathbb{R}^{m\times n_1}\) and \(A_2\in\mathbb{R}^{m\times n_2}\) be known constraint matrices, and let
		\(w^{\mathrm{1st}}\in\mathbb{R}^{n_1}\) and \(w^{\mathrm{2nd}}\in\mathbb{R}^{n_2}\) denote the first- and second-stage decisions, respectively. We compare the following four stochastic two-stage problems:
		\begin{align}
			\min_{w^{\mathrm{1st}}}\;&
			f(w^{\mathrm{1st}})
			+
			\mathbb{E}_{c \sim \mathcal{D}_c}
			\Big[
			\min_{w^{\mathrm{2nd}}}
			c^\top w^{\mathrm{1st}}
			\;\;\mathrm{s.t.}\;\;
			A_1 w^{\mathrm{1st}} + A_2 w^{\mathrm{2nd}} \le b
			\Big],
			\tag{SP1}\label{append:p1}
			\\[6pt]
			\min_{w^{\mathrm{1st}}}\;&
			f(w^{\mathrm{1st}})
			+
			\mathbb{E}_{b \sim \mathcal{D}_b}
			\Big[
			\min_{w^{\mathrm{2nd}}}
			c^\top w^{\mathrm{1st}}
			\;\;\mathrm{s.t.}\;\;
			A_1 w^{\mathrm{1st}} + A_2 w^{\mathrm{2nd}} \le b
			\Big],
			\tag{SP2}\label{append:p2}
			\\[6pt]
			\min_{w^{\mathrm{1st}}}\;&
			f(w^{\mathrm{1st}})
			+
			\mathbb{E}_{c \sim \mathcal{D}_c}
			\Big[
			\min_{w^{\mathrm{2nd}}}
			c^\top w^{\mathrm{2nd}}
			\;\;\mathrm{s.t.}\;\;
			A_1 w^{\mathrm{1st}} + A_2 w^{\mathrm{2nd}} \le b
			\Big],
			\tag{SP3}\label{append:p3}
			\\[6pt]
			\min_{w^{\mathrm{1st}}}\;&
			f(w^{\mathrm{1st}})
			+
			\mathbb{E}_{b \sim \mathcal{D}_b}
			\Big[
			\min_{w^{\mathrm{2nd}}}
			c^\top w^{\mathrm{2nd}}
			\;\;\mathrm{s.t.}\;\;
			A_1 w^{\mathrm{1st}} + A_2 w^{\mathrm{2nd}} \le b
			\Big].
			\tag{SP4}\label{append:p4}
		\end{align}
		Here, \(c\) has the appropriate dimension depending on whether it multiplies \(w^{\mathrm{1st}}\) or \(w^{\mathrm{2nd}}\). In \eqref{append:p1} and \eqref{append:p3}, the uncertainty lies in the objective coefficient vector \(c\). In contrast, in \eqref{append:p2} and \eqref{append:p4}, the uncertainty lies in the right-hand-side vector \(b\), and hence affects the feasible region of the second-stage problem. 

		\paragraph{Why is \eqref{append:p1} easier?}
		In \eqref{append:p1}, the random coefficient \(c\) multiplies the first-stage decision \(w^{\mathrm{1st}}\), while the feasible region of the second-stage problem is deterministic. Thus, whenever the recourse problem is feasible, the inner objective is simply linear in \(c\). Consequently,
		\[
		\mathbb{E}_{c\sim\mathcal{D}_c}
		\Big[
		\min_{w^{\mathrm{2nd}}}
		c^\top w^{\mathrm{1st}}
		\;\;\mathrm{s.t.}\;\;
		A_1 w^{\mathrm{1st}} + A_2 w^{\mathrm{2nd}} \le b
		\Big]
		=
		\min_{w^{\mathrm{2nd}}}
		\mathbb{E}[c]^\top w^{\mathrm{1st}}
		\;\;\mathrm{s.t.}\;\;
		A_1 w^{\mathrm{1st}} + A_2 w^{\mathrm{2nd}} \le b .
		\]
		Therefore, a point prediction \(\hat c \approx \mathbb{E}[c]\) is sufficient, and \eqref{append:p1} reduces to a deterministic optimization problem.
		
		\paragraph{Why is \eqref{append:p2} harder?}
		Problem \eqref{append:p2} differs from \eqref{append:p1} because the uncertainty appears in the RHS of the second-stage constraints. Even though the objective is still linear in \(w^{\mathrm{1st}}\), the feasibility of \(w^{\mathrm{1st}}\) now depends on the realized value of \(b\). Replacing \(b\) by its mean can therefore be misleading: a decision that is feasible for \(\mathbb{E}[b]\) may be infeasible for many realizations of \(b\). Thus, the main difficulty in \eqref{append:p2} is not the evaluation of a nonlinear recourse cost, but the need to enforce feasibility across scenarios, either almost surely, robustly, or with high probability through a chance constraint.
		
		\paragraph{Why is \eqref{append:p3} harder?}
		Problem \eqref{append:p3} differs from \eqref{append:p1} because the random coefficient \(c\) multiplies the adaptive second-stage decision \(w^{\mathrm{2nd}}\). For a fixed first-stage decision \(w^{\mathrm{1st}}\), expectation and minimization generally do not commute:
		\[
		\mathbb{E}_{c\sim\mathcal{D}_c}
		\left[
		\min_{w^{\mathrm{2nd}}}
		c^\top w^{\mathrm{2nd}}
		\;\;\mathrm{s.t.}\;\;
		A_1 w^{\mathrm{1st}} + A_2 w^{\mathrm{2nd}} \le b
		\right]
		\le
		\min_{w^{\mathrm{2nd}}}
		\mathbb{E}[c]^\top w^{\mathrm{2nd}}
		\;\;\mathrm{s.t.}\;\;
		A_1 w^{\mathrm{1st}} + A_2 w^{\mathrm{2nd}} \le b .
		\]
		Hence, predicting only the conditional mean of \(c\) is generally insufficient. The full distribution of \(c\) matters because different realizations of \(c\) may induce different optimal second-stage decisions. Nevertheless, the second-stage value in \eqref{append:p3} is convex in \(w^{\mathrm{1st}}\). Therefore, when \(f\) is convex and the recourse problem is well behaved, \eqref{append:p3} remains a convex stochastic optimization problem, even though it cannot be reduced to a deterministic problem by a mean plug-in.
		
		\paragraph{Why is \eqref{append:p4} harder?}
		Problem \eqref{append:p4} combines an adaptive second-stage objective with uncertainty in the right-hand-side. Let
		\[
		v(b;w^{\mathrm{1st}})
		:=
		\min_{w^{\mathrm{2nd}}}
		\left\{
		c^\top w^{\mathrm{2nd}}
		:
		A_1 w^{\mathrm{1st}} + A_2 w^{\mathrm{2nd}} \le b
		\right\}
		\]
		denote the second-stage value function. For fixed \(w^{\mathrm{1st}}\), the map \(b\mapsto v(b;w^{\mathrm{1st}})\) is generally piecewise linear and convex. Therefore, Jensen's inequality gives
		\[
		\mathbb{E}_{b\sim\mathcal{D}_b}
		\big[
		v(b;w^{\mathrm{1st}})
		\big]
		\ge
		v\big(\mathbb{E}[b];w^{\mathrm{1st}}\big).
		\]
		As a result, minimizing
		$f(w^{\mathrm{1st}})
		+
		v\big(\mathbb{E}[b];w^{\mathrm{1st}}\big)$
		can lead to a suboptimal first-stage decision. Moreover, the active constraints and the associated dual variables depend on the realized scenario \(b\).

		\newcolumntype{Y}{>{\raggedright\arraybackslash}X}
		
		\begin{table}[h]
			\centering
			\scriptsize
			\setlength{\tabcolsep}{2.6pt}
			\renewcommand{\arraystretch}{1.18}
			\begin{tabularx}{\linewidth}{@{}l l l c c c Y l@{}}
				\toprule
				Problem
				& Randomness
				& Entry point
				& \shortstack{Mean\\plug-in}
				& \shortstack{Potential scenario\\infeasibility}
				& \shortstack{Convex in\\\(w^{\mathrm{1st}}\)}
				& Example applications
				& Relation \\
				\midrule
				\eqref{append:p1}
				& \(c\)
				& \shortstack[l]{ First- or second-stage\\ objective}
				& \cmark
				& \xmark
				& \cmark
				& Canonical DFL; portfolio; routing; allocation
				& Baseline \\
				
				\eqref{append:p2}
				& \(b\)
				& Second-stage RHS
				& \xmark
				& \cmark
				& \cmark
				& Newsvendor; capacity planning; staffing; service levels
				& Special case of \eqref{append:p4} \\
				
				\eqref{append:p3}
				& \(c\)
				& Second-stage objective
				& \xmark
				& \xmark
				& \cmark
				& Multistage transportation; dispatch; assignment
				& Dual-related to \eqref{append:p4} recourse \\
				
				\eqref{append:p4}
				& \(b\)
				& Second-stage RHS
				& \xmark
				& \cmark
				& \cmark
				& Stochastic games; Markov chain steady states; network
				& General RHS-recourse case \\
				\bottomrule
			\end{tabularx}
			\caption{Comparison of four stochastic two-stage formulations.}
			\label{tab:sp-comparison}
		\end{table}

		The comparisons are summarized in Table~\ref{tab:sp-comparison}.
		Because predicting only the conditional mean of the uncertainty is generally insufficient when the feasible region is stochastic, a statistically reliable approach for handling uncertain feasible regions, or uncertain right-hand-sides, is to estimate the full conditional distribution in DFL. In DFL, because the stochastic feasible region forms a two-stage problem, its formulation can be connected to bilevel optimization, e.g., \citet{bucarey2024decision,qi2025integrated}.

		However, estimating the full distribution and solving the recourse problem potentially infinitely many times are computationally inefficient. This motivates the question of whether one can identify a point statistic that replaces the full distribution while still yielding the optimal first-stage decision. Although the single-item newsvendor problem enjoys this property, such decision-corrected point statistics do not extend in general to the multi-item newsvendor problem, as shown by \citet{er2025bias}. They provide counterexamples in which \emph{no} point estimate of the uncertain right-hand-side reproduces the optimal stochastic decision.
		
		The broader lesson is that uncertainty in the constraints introduces additional challenges beyond those in \eqref{append:p1}. Duality does not resolve this issue because the dual multipliers depend on the realized scenario, and evaluating the expected recourse value generally requires more than a single plug-in vector. In practice, if one allows a post-projection step to ensure feasibility, some computational methods can still achieve good empirical performance; see, for example, \citet{hu2023two}.
		
		\subsection{Motivation for focusing on stochastic (mixed integer) linear programs}
		
		Because nonlinear objectives and stochastic feasible regions are already difficult at the level of statistical representation, much of the recent theory focuses on the simpler downstream problem in \eqref{eq:bayes-lp}, with known feasible region $S$.
		This model retains the essential predict-then-optimize structure while avoiding the representation problem above: a Bayes-optimal point prediction exists and is simply $\bE[c\mid x]$.

		Another advantage of \eqref{eq:bayes-lp} is geometric. If $S$ is a bounded polyhedron with extreme points $w_{(1)},\dots,w_{(m)}$, then the cost space is partitioned into normal cones
		\[
		N_j := \{c\in\bbR^d : w_{(j)} \in \argmin_{w\in S} c^\top w\}, \qquad j=1,\dots,m.
		\]
		This partition is illustrated in Figure \ref{fig:partition}.
		\begin{figure}[H]
			\centering
			\includegraphics[width=0.55\linewidth]{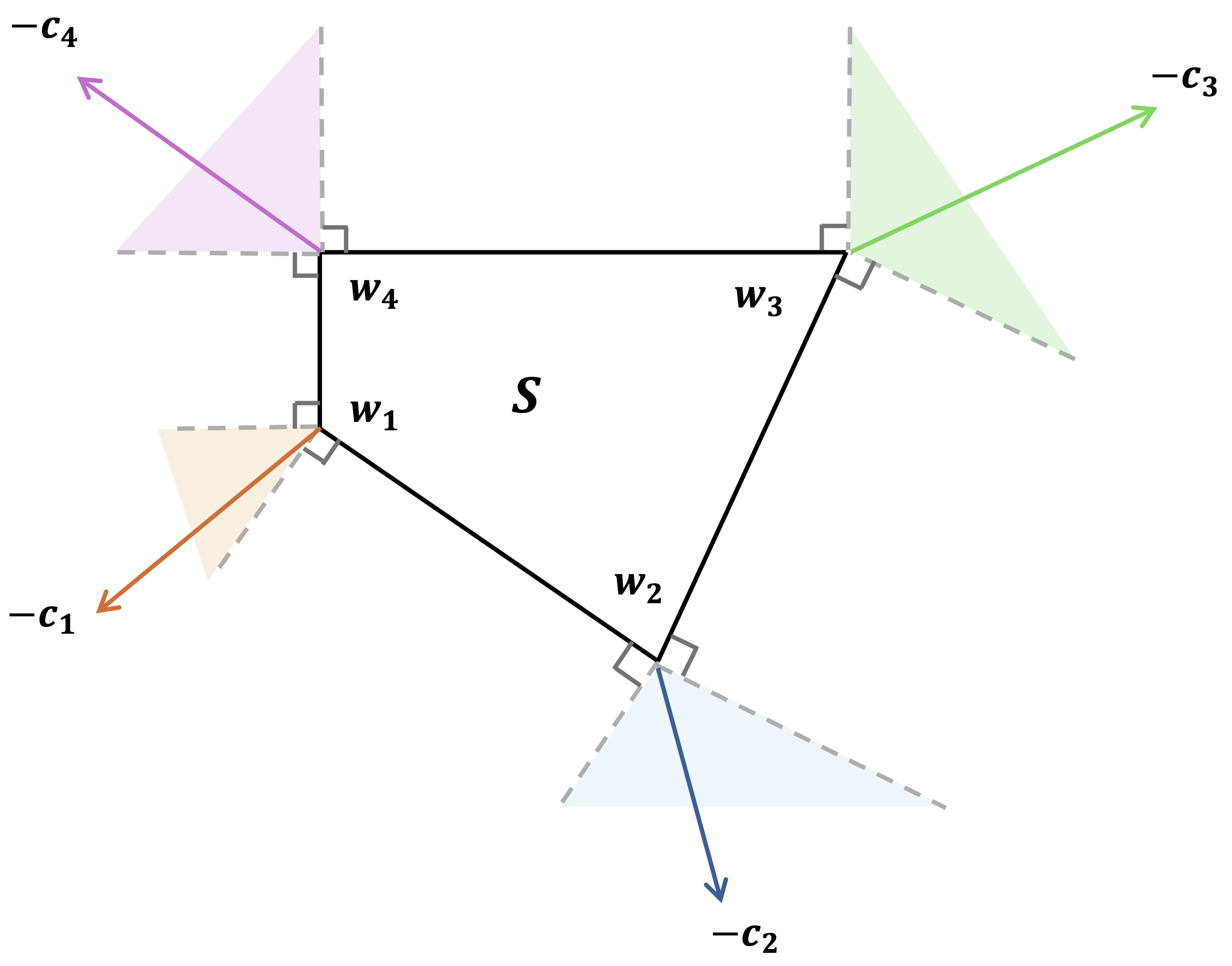}
			\includegraphics[width=0.42\linewidth]{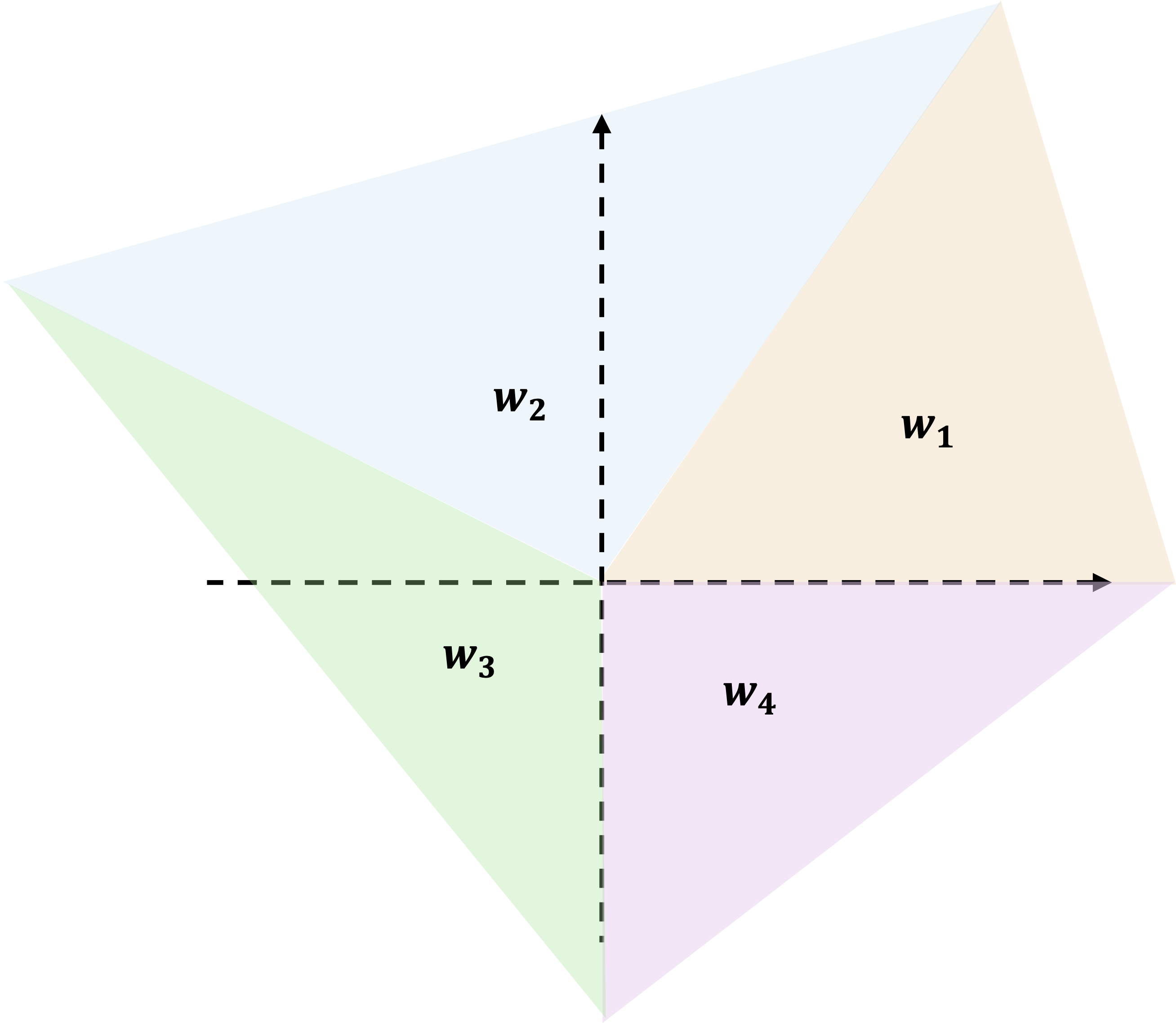}
			\caption{Geometric interpretation of $w^*(c)$ (left) and the partition of the cost space induced by optimal extreme points (right).}
			\label{fig:partition}
		\end{figure}

		In Figure~\ref{fig:partition}, the feasible region $S$ is represented by a polyhedron with four extreme points. The reverse directions of four example cost vectors are $-c_1, -c_2, -c_3$, and $-c_4$ in two dimensions. Minimizing the objective $c^\top w$ over $S$ intuitively selects the extreme point that lies farthest in the direction $-c$ within the feasible region. Thus, $w_1, w_2, w_3$, and $w_4$ are the optimal decisions under cost vectors $c_1, c_2, c_3$, and $c_4$, respectively. Accordingly, the entire two-dimensional cost space can be partitioned into four cones, each of which corresponds to one optimal extreme point.
		
		Because the optimal solutions can be generated from a finite set of
		extreme points, the same partitioning idea continues to hold when this set is extended to feasible integer solutions. Therefore, the analysis in the remainder of this tutorial also applies to (mixed-)integer linear programming. Such integer formulations arise in a broader class of OR problems, including knapsack, the traveling salesperson problem, combinatorial portfolio optimization, diverse bipartite matching, and energy-cost-aware scheduling; see \citet{mandi2024survey} for additional examples and discussion.
		
		Predictions that remain in the same cone induce the same decision, even if they differ substantially in Euclidean distance. Decision errors occur only when the predicted cost vector crosses from one cone to another. This cone geometry turns the downstream optimization problem into a weight-sensitive multiclass classification problem \citet{elmachtoub2022smart,liu2023active}, where each feature $x$ is classified into the extreme point $w^*(h(x))$.
		
		For these reasons, stochastic linear optimization has become a natural laboratory for studying the basic statistical questions of DFL, including calibration of surrogate losses, generalization bounds, sample complexity, active data collection, and decision-focused measures of distributional discrepancy (\citet{hu2022fast,elbalghiti2023generalization,liu2023active,wan2026directional}). The setting is simple enough to permit sharp theory, yet still rich enough to reveal precisely why classical learning tools can fail.
		
		In the next section, we illustrate several key properties of the decision loss function $\ld$ in stochastic linear programming.
		
		\subsection{Properties of the decision loss}\label{sec:property}
		
		Recall that, for linear-objective uncertainty, the per-instance decision loss is
		\[
		\ld(\hat c,c)
		:= c^\top w^*(\hat c)-c^\top w^*(c)
		\]

		The purpose of this section is to summarize several useful properties of the decision loss $\ld(\cdot,\cdot)$. These properties highlight its key differences from standard prediction losses, such as the squared loss, and provide useful tools for statistical analysis.
		Throughout, let $\operatorname{diam}(S):=\sup_{u,v\in S}\|u-v\|$ denote the size of the feasible region. We impose the following standard and mild assumptions.

		\begin{assumption}[Compact feasible set and fixed tie-breaking]\label{assumption:tie-break}
			Throughout this subsection, \(S\subseteq \mathbb{R}^d\) is nonempty, compact, and has finite extreme points. For each \(c\in\mathbb{R}^d\), \(w^*(c)\) denotes a deterministic selection from \(\argmin_{w\in S} c^\top w\), obtained via a fixed tie-breaking rule. Under this assumption, \(w^*(c)\) is well defined for every \(c\). 
		\end{assumption}

		\begin{theorem}[Properties of $\ld$ ]\label{thm:prop1} Under Assumption \ref{assumption:tie-break}, for any \(c,\hat c\in\mathbb{R}^d\), the following hold:
			
			\begin{enumerate}[label=(\thetheorem.\arabic*)]
				\item \textbf{Nonnegativity:}
				\[
				\ld(\hat c,c)\ge 0.
				\]
				\item \label{1.2}\textbf{Positive-scale invariance in the prediction:} for every \(\alpha>0\),
				\[
				\ld(\alpha \hat c,c)=\ld(\hat c,c).
				\]
				\item \label{1.3}\textbf{Positive homogeneity in the true cost:} for every \(\alpha\ge 0\),
				\[
				\ld(\hat c,\alpha c)=\alpha\,\ld(\hat c,c).
				\]
				\item \label{1.4}\textbf{Asymmetry:} there exists $c$ and $\hat c$,
				\[
				\ld(\hat c,c)\neq \ld(c,\hat c).
				\]
				\item \label{1.5}\textbf{Prediction-to-decision upper bound:} for any norm \(\|\cdot\|\) with dual norm \(\|\cdot\|_*\),
				\[
				0\le \ld(\hat c,c)
				\le \|\hat c-c\|_*\,\|w^*(\hat c)-w^*(c)\|
				\le \operatorname{diam}(S)\,\|\hat c-c\|_*.
				\]
				\item \label{1.6}\textbf{Dependence on true cost vector for a general convex feasible region. } The map $ c \longmapsto \ld(\hat c,c)$
				is finite, continuous, convex, and Lipschitz. 
				
				\item \label{1.7}\textbf{Dependence on true cost vector for a polyhedral feasible regions. } The map $ c \longmapsto \ld(\hat c,c)$ is piecewise linear, and more precisely
				\[
				\ld(\hat c,c)
				= c^\top w^*(\hat c)-\min_{w\in S} c^\top w
				= \max_{j=1,\dots,m} c^\top\big(w^*(\hat c)-w_{(j)}\big).
				\]
				\item \label{1.8}\textbf{Dependence on prediction. } The map \(\hat c\mapsto \ld(\hat c,c)\) is piecewise constant. In particular, its gradient with respect to \(\hat c\) is zero on each cone interior, while jumps may occur at cone boundaries. Thus, the map \(\hat c\mapsto \ld(\hat c,c)\) is generally nonconvex and discontinuous.
			\end{enumerate}
		\end{theorem}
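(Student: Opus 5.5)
I will prove the eight items directly from the definition $\ld(\hat c,c)=c^\top w^*(\hat c)-z(c)$, where $z(c):=\min_{w\in S}c^\top w$. This minimum is attained because $S$ is compact, and Assumption~\ref{assumption:tie-break} makes $w^*(\cdot)$ a well-defined single-valued map.

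\textbf{Nonnegativity and the two scaling properties.} Nonnegativity is immediate: $w^*(\hat c)\in S$, so $c^\top w^*(\hat c)\ge z(c)$. For \ref{1.2}, I use that $\argmin_{w\in S}(\alpha\hat c)^\top w=\argmin_{w\in S}\hat c^\top w$ for $\alpha>0$. Since the argmin sets coincide, I will take the tie-breaking rule to be scale-invariant, for instance a lexicographic rule applied to the argmin set. Then $w^*(\alpha\hat c)=w^*(\hat c)$ and \ref{1.2} follows. I will state this requirement on the tie-breaking rule explicitly, since a fully arbitrary selection could break ties differently along a ray. For \ref{1.3}, the case $\alpha>0$ follows from $(\alpha c)^\top w^*(\hat c)=\alpha c^\top w^*(\hat c)$ together with $z(\alpha c)=\alpha z(c)$. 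The case $\alpha=0$ is trivial, because both sides are zero.

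\textbf{Asymmetry.} For \ref{1.4} I will give a concrete example. Take $S=\mathrm{conv}\{(0,0),(1,0),(0,1)\}$, $c=(1,2)$ and $\hat c=(-1,-3)$. Then $w^*(c)=(0,0)$ and $w^*(\hat c)=(0,1)$, so $\ld(\hat c,c)=2$ and $\ld(c,\hat c)=0-(-3)=3$.

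\textbf{Prediction-to-decision bound.} This is the main inequality, \ref{1.5}, and I will obtain it by inserting $\hat c$. By optimality of $w^*(\hat c)$ under $\hat c$, we have $\hat c^\top(w^*(\hat c)-w^*(c))\le 0$. Hence
\[
\ld(\hat c,c)=c^\top\big(w^*(\hat c)-w^*(c)\big)\le (c-\hat c)^\top\big(w^*(\hat c)-w^*(c)\big)\le\|\hat c-c\|_*\,\|w^*(\hat c)-w^*(c)\|,
\]
where the last step is H\"older's inequality for dual norms. Finally, $\|w^*(\hat c)-w^*(c)\|\le\operatorname{diam}(S)$ because both points lie in $S$.

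\textbf{Dependence on $c$ and on $\hat c$.} For \ref{1.6}, with $\hat c$ fixed, the map $c\mapsto c^\top w^*(\hat c)$ is linear. The map $z(c)$ is a pointwise minimum of linear functions, hence concave and finite, so $-z$ is convex. To get the Lipschitz property, I will show $|z(c)-z(c')|\le\sup_{w\in S}\|w\|\,\|c-c'\|_*$. This gives Lipschitz continuity with constant at most $2\sup_{w\in S}\|w\|$, and continuity follows. For \ref{1.7}, since a linear objective over a polytope attains its minimum at an extreme point, $z(c)=\min_j c^\top w_{(j)}$, and substituting yields the max formula. A maximum of finitely many linear functions is piecewise linear.

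For \ref{1.8}, the loss depends on $\hat c$ only through $w^*(\hat c)$, which takes finitely many values (the extreme points, given the tie-breaking rule). On the interior of a normal cone $N_j$, the minimizer is the unique point $w_{(j)}$, so the loss is locally constant there and its gradient is zero. A discontinuity at a boundary can be exhibited with the example above: moving $\hat c$ across the boundary between the cones of $(0,1)$ and $(0,0)$ changes the loss from $2$ to $0$. This shows the map is discontinuous and therefore not convex. The only delicate point in the whole argument is the tie-breaking caveat in \ref{1.2}; everything else is routine.
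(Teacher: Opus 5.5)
Your proposal is correct, and your proof of \ref{1.5} is exactly the paper's argument: insert $\hat c$, drop the nonpositive term $\hat c^\top\big(w^*(\hat c)-w^*(c)\big)$, and apply the dual-norm (H\"older) inequality. This is the only item the paper proves explicitly (the rest are deferred to the cited literature); your elementary arguments for the other items are sound, and your caveat on \ref{1.2} is a genuine clarification rather than a gap, since scale invariance at degenerate $\hat c$ requires the tie-breaking rule to depend only on the argmin set, which is the natural reading of Assumption~\ref{assumption:tie-break}.
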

		
		Note that Theorems \ref{1.2}, \ref{1.3} and \ref{1.4} illustrate different geometric properties from the squared loss. The scale-invariant property in \ref{1.2} is the key for designing data collection methods in DFL, shown in Section \ref{sec:datacollection}. Theorem \ref{1.5} shows that the decision loss can be well-controlled by either of the two terms, the prediction error $\|\hat c - c\|_*$ or the decision error $\|w^*(\hat c)-w^*(c)\|$. Theorems (1.1-1.6) can be generalized to the convex feasible region, not necessarily a polyhedron.

		Theorem \ref{1.8} implies that the empirical SPO risk can be flat on large regions and discontinuous at cone boundaries. Consequently, direct gradient-based optimization of the exact loss is difficult, even when the prediction model itself is smooth.
		
		Theorems \ref{1.5} and \ref{1.8} imply that zero prediction error implies zero decision loss, but the converse fails. Any prediction \(\hat c\) that lies in the same normal cone as \(c\) induces the same optimizer and therefore incurs zero decision loss. Thus, many predictions with nonzero Euclidean error are decision-equivalent.
		
		The proof of most properties in Theorem \ref{thm:prop1} can be found in \citet{elmachtoub2022smart,liu2023active,wan2026directional,elbalghiti2023generalization}. Here, we only provide the proof of Theorem \ref{1.5} for illustration.

		\emph{\bfseries Proof of Theorem \ref{1.5}} The proof is by the decomposition of $\ld$ and the Cauchy–Schwarz inequality.
		\begin{align*}
			\ld(\hat c,c)
			&= c^\top\big(w^*(\hat c)-w^*(c)\big) \\
			&= (c-\hat c)^\top\big(w^*(\hat c)-w^*(c)\big)
			+ \hat c^\top\big(w^*(\hat c)-w^*(c)\big) \\
			&\le (c-\hat c)^\top\big(w^*(\hat c)-w^*(c)\big) \\
			&\le \|\hat c-c\|_*\,\|w^*(\hat c)-w^*(c)\|.
		\end{align*}
		\hfill\(\square\)

		\subsection{Properties of the expected decision loss}\label{sec:expected_property}
		
		In this section, we further summarize some properties of $\ld$ under randomness. We ignore the dependence on the contextual information $x$ for notational convenience. All the following properties also hold when conditional on feature $x$.
		Let
		\[
		z(c):=\min_{w\in S} c^\top w,
		\qquad
		R(\hat c):=\bE[\ld(\hat c,c)],
		\qquad
		\mu:=\bE[c].
		\]
		
		In the stochastic setting, given a prediction $\hat c$, we cannot simply use the mean $\mu$ to evaluate the decision risk. Indeed, the mean-evaluated risk, $\ld(\hat c,\mu)$, may be smaller than the true risk $R(\hat c)$, and therefore may underestimate the actual decision risk. Theorem \ref{thm:prop2} summarizes several properties of $\ld$ under randomness.
		
		\begin{theorem}[Properties of $\ld$ under uncertainty]\label{thm:prop2}
			Under Assumption \ref{assumption:tie-break}, suppose that $c$ follows a fixed distribution. For any \(\hat c\in\mathbb{R}^d\), the following hold:
			\begin{enumerate}[label=(\thetheorem.\arabic*)]
				
				\item \textbf{Nonnegativity of risk.}
				For every \(\hat c\in\mathbb{R}^d\), we have $R(\hat c)\ge 0$.
				
				\item \label{2.2}\textbf{Risk decomposition (excess risk identity).}
				$R(\hat c)-R(\mu)=\ld(\hat c,\mu)$.
				
				\item \label{2.3}\textbf{Plug-in lower bound (Jensen gap characterization).}
				$R(\mu)=z(\mu)-\bE[z(c)]\ge 0$. Equivalently, $\bE[\ld(\bE[c],c)]\ge0$.
				
				\item \label{2.4}\textbf{Lower bound via plug-in decision loss.}
				$R(\hat c)\ge \ld(\hat c,\mu)$.
				
				\item \label{2.5}\textbf{Flatness of expected loss (a.e., zero gradient).}
				\[
				\nabla_{\hat c}\,\bE[\ld(\hat c,c)]=0
				\quad \text{for a.e. } \hat c.
				\]
				
				\item \label{2.6}\textbf{Monotonicity along rays (directional monotonicity).}
				\[
				\ld(c+\Delta,c)\le \ld(c+2\Delta,c),
				\]
				and more generally $t\mapsto \ld(c+t\Delta,c)$ is nondecreasing on $[0,\infty)$.
				
			\end{enumerate}
		\end{theorem}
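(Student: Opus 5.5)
The whole argument rests on one observation. For fixed $\hat c$, the map $c\mapsto c^\top w^*(\hat c)$ is linear, and $w^*(\hat c)$ is a deterministic vector. Hence, assuming $\bE\|c\|<\infty$ (needed for $R$ and $\mu$ to be finite; compactness of $S$ then makes every term integrable), linearity of expectation gives
\[
R(\hat c)=\mu^\top w^*(\hat c)-\bE[z(c)].
\]
I would first establish this identity and then read off most items from it.

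\begin{itemize}
\item (2.1) follows from Theorem~\ref{thm:prop1}'s nonnegativity, taking expectations.
\item (2.2) comes from subtracting the identity at $\hat c$ and at $\mu$. This gives $\mu^\top w^*(\hat c)-\mu^\top w^*(\mu)$, which is exactly $\ld(\hat c,\mu)$ since $z(\mu)=\mu^\top w^*(\mu)$.
\item (2.3) is the identity at $\hat c=\mu$, namely $R(\mu)=z(\mu)-\bE[z(c)]$. Its nonnegativity follows from Jensen's inequality, because $z$ is a pointwise minimum of linear functions and hence concave (and finite, since $S$ is compact), so $\bE[z(c)]\le z(\mu)$.
\item (2.4) combines (2.2) with (2.3): $R(\hat c)=R(\mu)+\ld(\hat c,\mu)\ge \ld(\hat c,\mu)$.
\end{itemize}

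For (2.5), the identity shows that the only $\hat c$-dependence of $R$ is through $\mu^\top w^*(\hat c)$. So it suffices to show that $\hat c\mapsto w^*(\hat c)$ is locally constant off a Lebesgue-null set. Consider the set $E$ of $\hat c$ at which the argmin is not a singleton. Any such $\hat c$ has at least two optimal extreme points $w_{(i)}\neq w_{(j)}$ (the optimal face is a face of $S$ with finitely many extreme points). Hence
\[
E\subseteq \bigcup_{i\ne j}\{\hat c:\hat c^\top(w_{(i)}-w_{(j)})=0\},
\]
a finite union of hyperplanes and therefore null. For $\hat c\notin E$, the unique optimizer is some extreme point $w_{(j)}$, and it beats every other extreme point by a strict margin. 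By continuity of $\hat c\mapsto \hat c^\top w_{(k)}$, and finiteness of the extreme-point set, this margin persists in a neighborhood. There $w^*$ is constant regardless of the tie-breaking rule, so the gradient of $R$ vanishes. This measure-zero and tie-breaking step is the main technical point. The only subtlety is ensuring that non-uniqueness is always witnessed by two distinct extreme points, which holds because the optimal face is compact with finitely many extreme points.

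For (2.6), I would use the standard two-inequality monotonicity argument. Set $w_t:=w^*(c+t\Delta)$ and take $0\le s<t$. Optimality of each decision for its own cost gives
\[
(c+s\Delta)^\top w_s\le (c+s\Delta)^\top w_t,
\qquad
(c+t\Delta)^\top w_t\le (c+t\Delta)^\top w_s .
\]
Adding the two yields $(t-s)\Delta^\top(w_t-w_s)\le 0$, that is, $\Delta^\top w_t\le \Delta^\top w_s$. Substituting back into the first inequality gives
\[
c^\top w_s-c^\top w_t\le s\,\Delta^\top(w_t-w_s)\le 0 .
\]
Hence $c^\top w_s\le c^\top w_t$, and subtracting the constant $c^\top w^*(c)$ shows that $t\mapsto\ld(c+t\Delta,c)$ is nondecreasing. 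The special case $s=1,t=2$ is the displayed inequality.
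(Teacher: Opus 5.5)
Your proposal is correct and follows essentially the paper's route: the identity $R(\hat c)=\mu^\top w^*(\hat c)-\bE[z(c)]$, the split $R(\hat c)=\ld(\hat c,\mu)+\bigl(z(\mu)-\bE[z(c)]\bigr)$ with Jensen applied to the concave $z$, and the pair of optimality inequalities for (2.6) are what the paper uses (it writes the last step as $A+sB\ge 0$, $A+tB\le 0$ with $A=c^\top(w_t-w_s)$, $B=\Delta^\top(w_t-w_s)$). Your hyperplane and local-constancy argument for (2.5), which the paper only cites from the literature, is also sound.
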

		
		Theorem \ref{2.2} shows that the excess risk is the decision loss between the prediction $\hat c$ and the true mean $\mu$. Theorem \ref{2.3} shows that the Bayesian risk can be nonzero. Theorem \ref{2.4} shows that if we just use the plug-in mean to evaluate the decision risk, this evaluation can be pessimistic in practice. Theorem \ref{2.5}  shows that the gradient is zero almost everywhere. Theorem \ref{2.6} shows that in general, when the prediction is further away from the true mean, the decision loss increases. 
		
		Most of the proof can be found in \citet{liu2023active,wan2026directional}. Here, we provide the proof of Theorems \ref{2.2}, \ref{2.3}, and \ref{2.6} for illustration.
		
		\emph{\bfseries Proof of Theorems \ref{2.2}, \ref{2.3}, and \ref{2.6}.}
		
		Since \(z(c)=c^\top w^*(c)\), we have
		\[
		R(\hat c)
		= \bE\!\left[c^\top w^*(\hat c)-z(c)\right]
		= \mu^\top w^*(\hat c)-\bE[z(c)].
		\]
		Therefore,
		\begin{align}
			R(\hat c)
			&= \bigl(\mu^\top w^*(\hat c)-z(\mu)\bigr)
			+ \bigl(z(\mu)-\bE[z(c)]\bigr) \notag\\
			&= \ld(\hat c,\mu) + \bigl(z(\mu)-\bE[z(c)]\bigr).
			\label{eq:risk-decomposition}
		\end{align}
		Because \(z(c)=\min_{w\in S} c^\top w\) is the pointwise minimum of linear functions, it is concave. Hence, by Jensen's inequality, we have $z(\mu)\ge \bE[z(c)]$.
		Equation~\eqref{eq:risk-decomposition} gives
		\[
		R(\hat c)=\ld(\hat c,\mu)+\bigl(z(\mu)-\bE[z(c)]\bigr)\ge0.
		\]
		Taking \(\hat c=\mu\) yields \ref{2.3}, and subtracting \(R(\mu)\) from \(R(\hat c)\) gives \eqref{2.2}. Because \(\ld(\hat c,\mu)\ge 0\), \(\mu\) is a minimizer of \(R\). 
		
		Next, to prove \ref{2.6}, fix \(0\le s<t\), and let
		\[
		w_s:=w^*(c+s\Delta),
		\qquad
		w_t:=w^*(c+t\Delta).
		\]
		By optimality,
		\begin{align}
			(c+s\Delta)^\top w_s &\le (c+s\Delta)^\top w_t, \label{eq:mono1}\\
			(c+t\Delta)^\top w_t &\le (c+t\Delta)^\top w_s. \label{eq:mono2}
		\end{align}
		Let
		\[
		A:=c^\top(w_t-w_s),
		\qquad
		B:=\Delta^\top(w_t-w_s).
		\]
		Then \eqref{eq:mono1} and \eqref{eq:mono2} become
		\[
		A+sB\ge 0,
		\qquad
		A+tB\le 0.
		\]
		Since \(t>s\), these inequalities imply \(B\le 0\), and therefore
		\[
		A\ge -sB\ge 0.
		\]
		Thus
		\[
		c^\top w_t\ge c^\top w_s,
		\]
		which is equivalent to
		\[
		\ld(c+s\Delta,c)\le \ld(c+t\Delta,c).
		\]
		Taking \(s=1\) and \(t=2\) gives the stated inequality. \hfill\(\square\)

		These insights from Theorems~\ref{thm:prop1} and~\ref{thm:prop2} serve as the foundation for addressing statistical learning questions in DFL. The next sections revisit these questions through data collection, uncertainty quantification, and distances between distributions.

		\section{Data Collection for Decision-Focused Learning}\label{sec:datacollection}
		
		Data collection is a central problem in statistical learning, where one studies how many samples, and which kinds of samples, are needed to ensure that prediction quality or decision quality exceeds a desired threshold. When samples are collected i.i.d., the stopping time for data collection follows directly from the sample complexity or generalization error bound. However, as shown in the toy example in Figure \ref{example:shortest}, collecting data i.i.d. is not efficient for DFL. A more efficient data collection method is to focus on the decision-relevant samples. This sample selection is an important question
		whenever labels are expensive. In the present setting, a ``label'' refers to a realization of the uncertain quantity that enters the downstream optimization problem, such as a cost vector or a demand vector. In many applications, observing this quantity requires running a costly experiment, collecting detailed operational data, or waiting for a stochastic outcome to be realized. This naturally leads to \textit{active-learning} and \textit{sequential-experimental-design} questions: which contexts $x$ should we label, and when should we stop collecting data?
		
		In traditional statistical learning, data collection usually prioritizes points with higher prediction uncertainty. A key message of DFL, however, is that informative points are not necessarily those with a large prediction uncertainty. Rather, they are the points whose uncertainty is most likely to change the downstream decision. As shown in Figure \ref{fig:partition}, the decision map $c\mapsto w^*(c)$ is constant inside each normal cone and changes only when the predicted cost vector crosses a cone boundary. This geometry motivates two complementary approaches to data collection. The first is the \emph{margin-based method} in \citet{liu2023active}: quantify how far the current prediction is from degeneracy and collect more data only when the current confidence region intersects a cone boundary; see Section~\ref{sec:margin}. The second is the \emph{direction-based method} in \citet{wan2026directional}: exploit the scale invariance of the decision loss and measure predictive disagreement through normalized directions rather than Euclidean distance; see Section \ref{sec:direction}.
		
		\subsection{Problem setup and motivation}
		
		Let $\mathcal{X}$ denote the feature space, or a pool of unlabeled contexts. In the setting of experimental design, these unlabeled features are often called design points. For each $x\in\mathcal{X}$, we may pay a cost to observe a realization of the uncertain quantity. The goal is to learn a predictor $h(x)$ while spending as few labels as possible, subject to achieving low downstream decision loss. This formulation covers pool-based active learning, sequential experimental design, adaptive simulation, and online data-acquisition problems.
		
		Let $h_t(x)$ be the current estimate after $t$ labels. Suppose further that the learner maintains a confidence ball, around $h_t(x)$, for example
		\[
		\mathcal{B}_t(x)
		:= \{u\in\bbR^d:\ \|u-h_t(x)\|\le r_t(x)\},
		\]
		where $r_t(x)$ is a radius summarizing the remaining statistical uncertainty. These confidence balls are illustrated by the green circles in Figure~\ref{fig:margin_based}. Intuitively, each confidence ball is constructed so that the true conditional mean $\bE[c\mid x]$ lies within the ball with high probability. In practice, $r_t(x)$ can be derived from concentration inequalities or generalization error bounds, and it typically decays at the rate \(O(1/\sqrt{n})\) when \(n\) samples are collected. It can also be obtained through empirical methods or conformal prediction.
		
		A prediction-focused strategy would collect labels with the largest radius $r_t(x)$. In DFL, however, a large radius is important only if it can change the optimal decision induced by the current prediction. This is why uncertainty must be measured relative to the geometry of the feasible region.
		
		\subsection{Margin-based data collection}\label{sec:margin}
		
		The margin-based view quantifies how far a predicted cost vector is from a \emph{decision boundary}. Recall that when the feasible region $S$ is a bounded polyhedron with extreme points $w_{(1)},\dots,w_{(m)}$, the cost space is partitioned into normal cones
		\[
		N_j:=\{u\in\bbR^d: w_{(j)}\in\argmin_{w\in S} u^\top w\},
		\qquad j=1,\dots,m.
		\]
		Inside the interior of each cone, the oracle $w^*(u)$ is constant. Decision uncertainty, therefore, concentrates near the union of cone boundaries.
		
		\begin{definition}[Distance to degeneracy, adapted from \citet{elbalghiti2023generalization}]\label{def:distance_to_degeneracy}
			For a prediction $\hat c\in\bbR^d$, define its \emph{distance to degeneracy} by
			\[
			d_{\mathrm{deg}}(\hat c)
			:= \inf\bigl\{\|\Delta\|:\ w^*(\hat c+\Delta)\neq w^*(\hat c)\bigr\}.
			\] In particular, $d_{\mathrm{deg}}(\hat c)=0$ whenever $\hat c$ lies on a cone boundary.
		\end{definition}
		
		When $S$ is polyhedral, this quantity admits an explicit formula. Suppose $w^*(\hat c)=w_{(j)}$ is unique. Since
		\[
		N_j = \bigcap_{k\neq j}\{u\in\bbR^d: u^\top(w_{(k)}-w_{(j)})\ge 0\},
		\]
		the distance from $\hat c$ to the boundary of $N_j$ is
		\begin{equation}\label{eq:distance_to_degeneracy_formula}
			d_{\mathrm{deg}}(\hat c)
			= \min_{k\neq j}
			\frac{\hat c^\top(w_{(k)}-w_{(j)})}{\|w_{(k)}-w_{(j)}\|_*},
		\end{equation}
		where $\|\cdot\|_*$ is the dual norm of $\|\cdot\|$. Under the Euclidean norm, \eqref{eq:distance_to_degeneracy_formula} reduces to the usual perpendicular distance from $\hat c$ to the nearest supporting hyperplane of the cone.
		
		The right panel of Figure~\ref{fig:margin_based} illustrates the distance to degeneracy for the yellow prediction vector. In this example, the distance to degeneracy is given by the length of the red vector.
		\begin{figure}[ht]
			\centering
			\includegraphics[width=\linewidth]{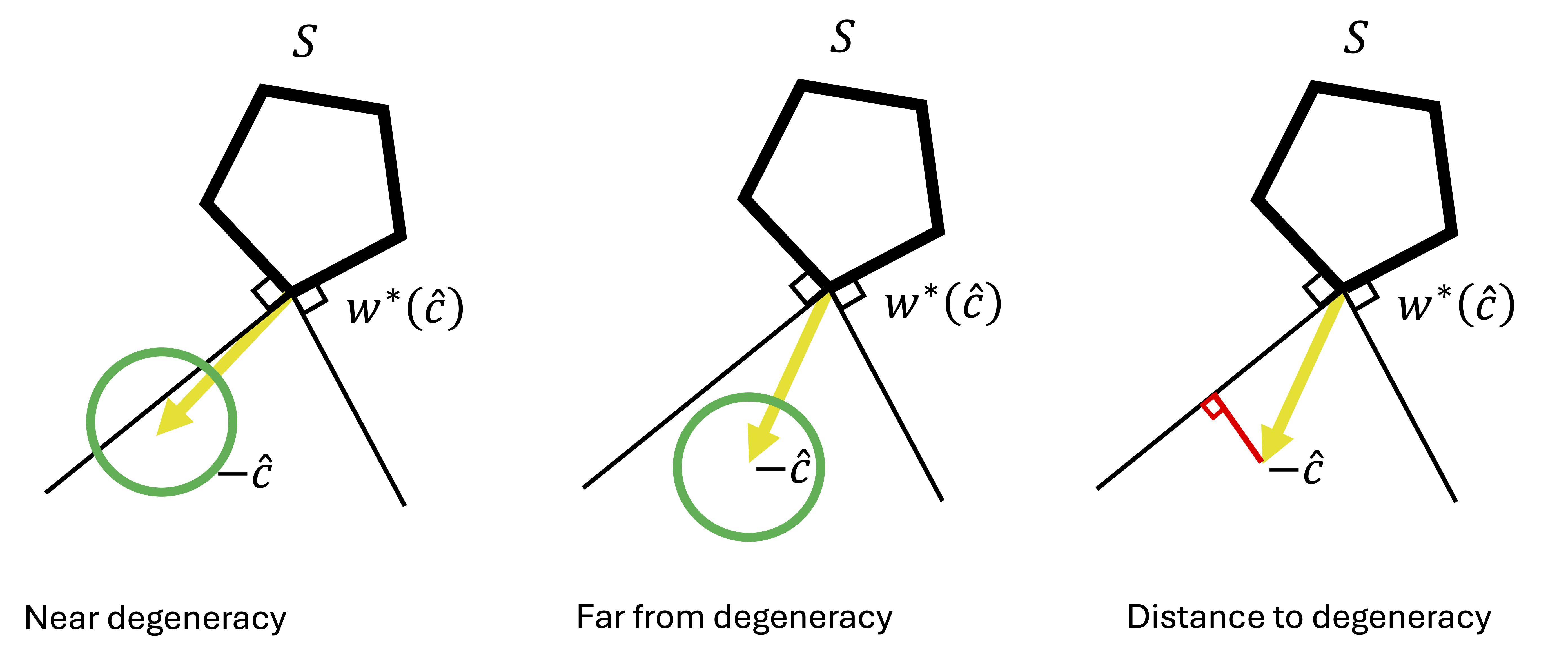}
			\caption{Illustration of the margin-based method. Left and middle: Two predictions may have confidence regions of the same size, yet only the prediction near a cone boundary is decision-uncertain. Right: under the Euclidean norm, the distance to degeneracy is the perpendicular distance from the predicted cost direction to the nearest cone boundary.}
			\label{fig:margin_based}
		\end{figure}
		
		Definition~\ref{def:distance_to_degeneracy} yields a simple certification principle. If the entire confidence region $\mathcal{B}_t(x)$ lies inside a single cone, then every plausible cost vector induces the same optimizer, so the decision at $x$ is already certified. A sufficient condition is
		\begin{equation}\label{eq:certification_condition}
			r_t(x) < d_{\mathrm{deg}}\bigl(h_t(x)\bigr).
		\end{equation}
		Whenever \eqref{eq:certification_condition} holds, further sampling at $x$ may improve prediction accuracy but cannot change the induced decision. It will result in a stopping time for collecting labels of $x$.
		Conversely, if the radius is comparable to or larger than the distance to degeneracy, then the current uncertainty region intersects a cone boundary, and additional labels at $x$ may still be decision-relevant.
		
		Despite this intuitive idea, the value of $r_t(x)$ for each feature point $x$ and each time step $t$ requires careful design, and depends on both the training loss and the structure of the prediction class. Intuitively, a simpler prediction class, such as a linear model, leads to a faster estimation error rate and therefore a smaller confidence radius. Because adaptive data selection produces non-i.i.d.\ observations, establishing the convergence of the prediction model requires additional techniques, such as reweighting. We refer readers to \citet{liu2023active} for a specific setup and detailed analysis.
		
		The effectiveness of this margin-based approach depends on how often the data-generating distribution places mass near cone boundaries. A standard way to quantify this is through a soft-margin condition.
		
		\begin{definition}[Soft-margin condition, \citet{liu2023active}]\label{def:soft_margin}
			We say that the contextual distribution satisfies a \emph{soft-margin condition} with parameters $(K,\kappa)$ if, for all sufficiently small $t>0$,
			\[
			\mathbb{P}_X\bigl(d_{\mathrm{deg}}(\bE[c|X])\le t\bigr)
			\le K t^\kappa.
			\]
		\end{definition}
		
		The parameter $\kappa$ measures how much probability mass lies near degeneracy. A large value of $\kappa$ means that most contexts are well separated from cone boundaries, in which case many decisions can be certified using only a small number of labels. This is the analogue of Tsybakov's margin condition in classification. Under such conditions, margin-based active-learning methods can enjoy substantially faster rates than passive data collection, because only near-boundary contexts require intensive sampling; see \citet{liu2023active}.
		
		Two observations help justify the soft-margin condition in practice. First, this condition is governed by the conditional mean $\mathbb{E}[c\mid X]$, rather than by the entire conditional distribution of $c\mid X$. Thus, even if the distribution of $c\mid X$ is continuous and places substantial density near the degeneracy set, its conditional mean may still be well separated from degeneracy. Second, any distribution of \(\bE[c \mid X]\) can be made to satisfy the soft-margin condition by appropriately stretching the vectors \(\bE[c \mid X]\); see, for example, Example 4 in \citet{liu2023active}. 
		
		
		\subsection{Direction-based data collection}\label{sec:direction}
		
		The direction-based approach starts from the scale-invariant property in Theorem \ref{1.2}. This property implies that the scale information of the prediction is decision-irrelevant, while small directional errors can move the prediction across a cone boundary and induce positive regret. This suggests that, for data collection, uncertainty should be measured through disagreement in \emph{direction} rather than disagreement in Euclidean norm. This insight can be used to design sequential experiments for data collection.
		In sequential experimental design, a common strategy is to query the design whose predictive distribution exhibits the largest uncertainty; see, for example, \citet{zhao2024experimental}.
		
		A natural, but decision-blind, choice is to measure uncertainty through the $\ell_2$ spread of the predictions produced by the current candidate models. The following example shows why this can be misleading for DFL.
		
		\begin{figure}[H]
			\centering
			\begin{minipage}[t]{0.40\textwidth}
				\centering
				\includegraphics[width=\linewidth]{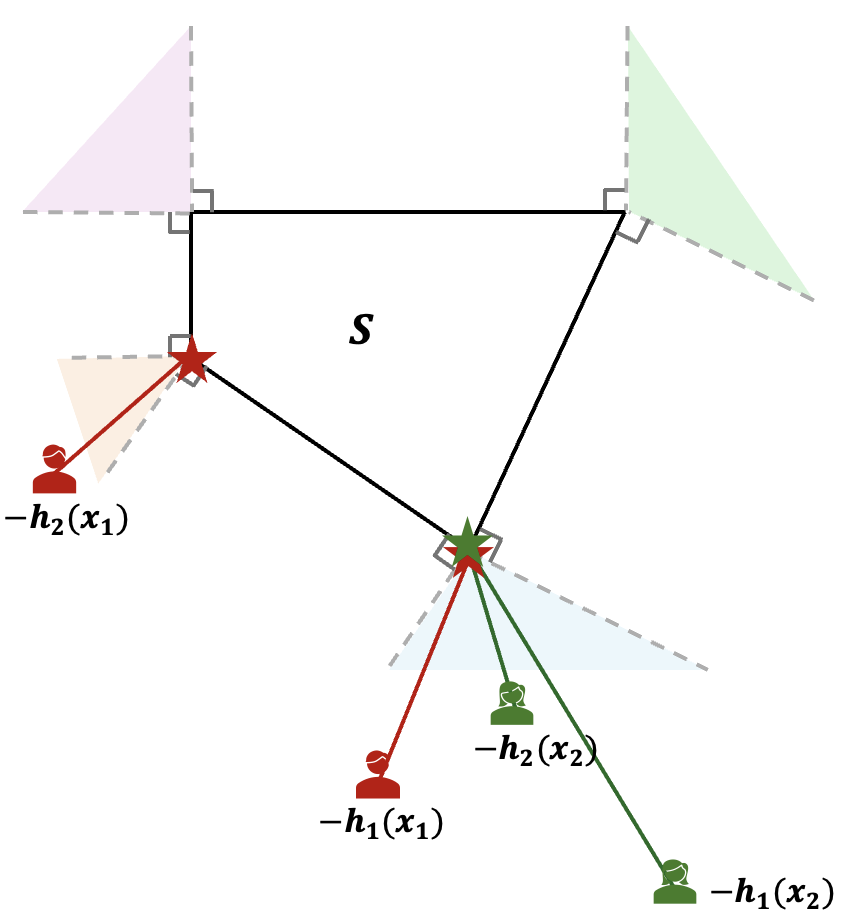}
				\caption{Predicted cost vectors and their corresponding optimizers.}
				\label{fig:predicted_cost_vector_with_corresponding_optimizer}
			\end{minipage}
			\hfill
			\begin{minipage}[t]{0.48\textwidth}
				\centering
				\includegraphics[width=\linewidth]{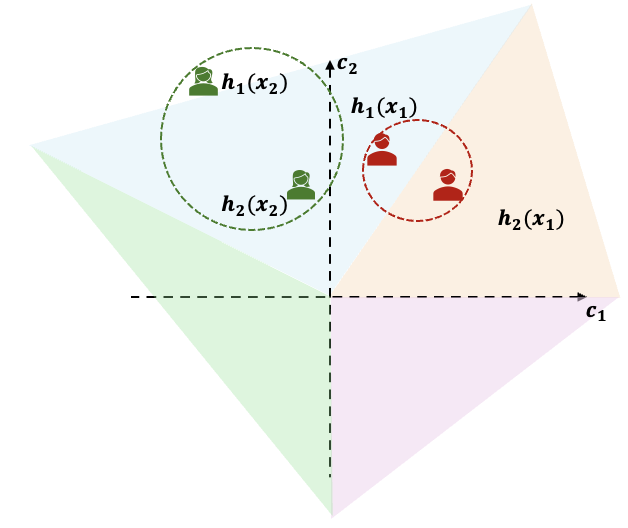}
				\caption{The same predictions viewed through Euclidean spread.}
				\label{fig:predicted_cost_vector_with_l2_distance}
			\end{minipage}
		\end{figure}
		
		\begin{example}\label{example:motivation}
			Consider a two-dimensional setting with two candidate predictors $\mathcal{H}=\{h_1,h_2\}$ and two candidate designs $\mathcal{X}=\{x_1,x_2\}$. In Figure~\ref{fig:predicted_cost_vector_with_corresponding_optimizer}, red points correspond to predictions at $x_1$ and green points correspond to predictions at $x_2$. For ease of visualization, each cost vector is represented by its negative direction. Although the two green predictions are farther apart in Euclidean distance, they lie in the same cone and therefore induce the same optimizer, as shown in Figure \ref{fig:predicted_cost_vector_with_l2_distance}. By contrast, the two red predictions are closer in Euclidean distance but lie in different cones, so they induce different optimizers in Figure \ref{fig:predicted_cost_vector_with_l2_distance}. Hence, $x_1$ is more uncertain from the perspective of decision loss, even though $x_2$ has a larger $\ell_2$ spread.
		\end{example}
		
		Example~\ref{example:motivation} shows that the geometry relevant to DFL is angular rather than radial. Motivated by this observation, for a given design $x$ and a class of candidate models $H$, consider the following three uncertainty scores:
		\begin{enumerate}[
			label=\textbf{Metric \arabic*.},
			leftmargin=3.2em,
			align=left
			]
			\item Standard Euclidean disagreement: $
			U^{\ell_2}(x)
			:= \max_{h_1,h_2\in H}\|h_1(x)-h_2(x)\|$,

			\item Computational intractable decision-loss disagreement: $$
			U^{\mathrm{decision}}(x)
			:= \max_{h_1,h_2\in H}
			\max\!\bigl\{\ld(h_1(x),h_2(x)),\,\ld(h_2(x),h_1(x))\bigr\}.$$
			
			\item Normalized directional disagreement: $U^{\mathrm{dir}}(x)
			:= \max_{h_1,h_2\in H}
			\left\|
			\frac{h_1(x)}{\|h_1(x)\|_2}
			-
			\frac{h_2(x)}{\|h_2(x)\|_2}
			\right\|_2$.
		\end{enumerate}
		
		Metric~3 can also be viewed as an angular disagreement measure since $
		\left\|\frac{u}{\|u\|_2}-\frac{v}{\|v\|_2}\right\|_2
		=2\sin\!\left(\frac{\angle(u,v)}{2}\right)$.

		\begin{figure}[h]
			\centering
			\begin{minipage}[t]{0.40\textwidth}
				\centering
				\includegraphics[width=\linewidth]{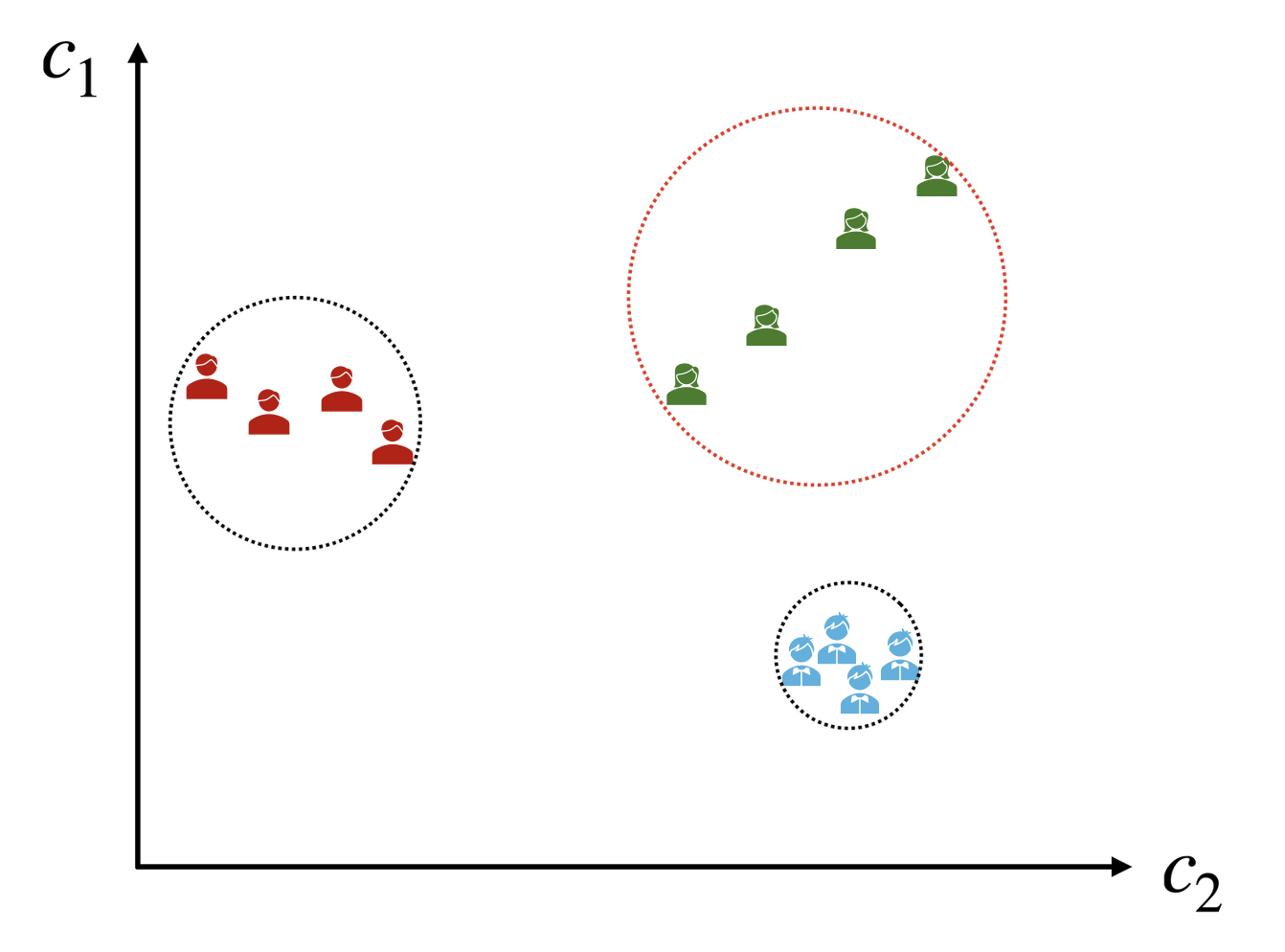}
				\caption{\textbf{Metric 1: }$\ell_2$-based uncertainty.}
				\label{fig:l2_uncertainty}
			\end{minipage}
			\hfill
			\begin{minipage}[t]{0.50\textwidth}
				\centering
				\includegraphics[width=\linewidth]{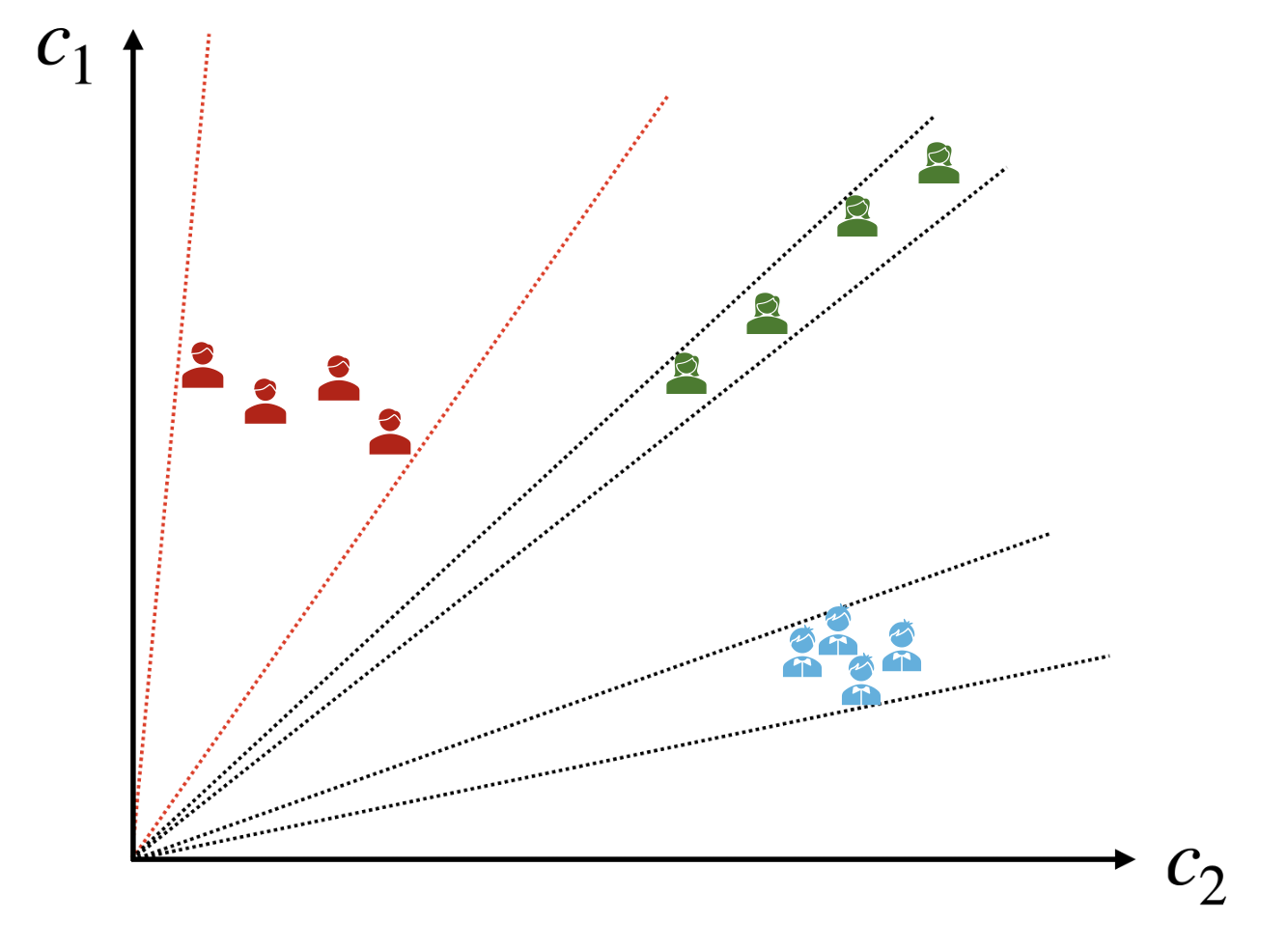}
				\caption{\textbf{Metric 3: } Direction-based uncertainty.}
				\label{fig:angle_uncertainty}
			\end{minipage}
		\end{figure}
		
		Figures~\ref{fig:l2_uncertainty} and \ref{fig:angle_uncertainty} provide a second illustration of these uncertainty metrics. There are three candidate designs (red, green, and blue) and four candidate predictors $h_1,\dots,h_4\in\mathcal{H}$. Under the $\ell_2$ metric, the green design has the largest spread and would be selected by a standard uncertainty-sampling rule. Under the directional metric, however, the red design has the largest angular disagreement and is the more decision-relevant query. This reflects the fact that the green predictions mainly differ in magnitude, whereas the red predictions differ in direction and are therefore more likely to cross cone boundaries.
		
		Metric~1 is computationally cheap but decision blind. Metric~2 is closely aligned with the decision loss, but evaluating it requires repeated solution of the downstream optimization problem for many pairs of predictions, which can be prohibitive in sequential settings. Metric~3 sits between these extremes: it is computationally lightweight, requires no optimization-oracle calls, and directly exploits the scale invariance of the decision loss.
		
		Based on this directional uncertainty metric, \textbf{Metric 3}, \citet{wan2026directional} propose a new decision-focused sequential experimental design approach. At each iteration, candidate designs are sampled randomly with probabilities proportional to their directional uncertainty, and the hypothesis class is shrunk appropriately over time. They show that this approach leads to an earlier stopping time and a faster convergence rate. Its theoretical advantage over decision-blind design is further analyzed in \citet{wan2026directional} under several special noise distributions. They also demonstrate strong empirical performance on real-data experiments.
		
		\paragraph{Applications of decision-focused data collection methods.}
		Decision-focused data collection is useful whenever costly labels are only intermediate objects and the ultimate objective is to improve downstream actions. In marketing and assortment planning, it prioritizes customer segments for which plausible demand models lead to different stocking or pricing decisions; see, e.g., \citet{liu2023value}. In personalized treatment, it targets patient groups whose response uncertainty could change the selected intervention; see, e.g., \citet{chung2026improving}. In A/B testing for service design, it guides the selection of experimental designs that are most informative for estimating how short-term effects translate into long-term outcomes, thereby supporting better product or service decisions; see, e.g., \citet{chernozhukov2024applied}. The same principle also appears in modern AI systems: when preference labels or human reward evaluations are expensive, one should prioritize prompts for which plausible models disagree about the induced action, rather than merely about the scale of a score; see, e.g., \citet{wan2026directional}.

		\section{Decision-Focused Distance Between Distributions}\label{sec:distance}
		
		Distances between probability distributions are fundamental tools in statistics and machine learning. They are used for clustering, interpolation, kernel weighting, domain adaptation, and distribution-shift analysis. In DFL, however, a good distance should not only capture geometric discrepancy in the raw outcome space; it should also reflect whether two distributions induce similar \emph{decisions}. This section discusses why standard distances can be inadequate for that purpose and introduces a coupling-based decision-focused alternative, proposed in \citet{liu2026decision}.
		
		\subsection{Why standard distances may be inadequate}
		
		Classical distances compare distributions pointwise. Let $P$ and $Q$ be probability measures on a measurable space $(\Omega,\mathcal{F})$ with densities $p$ and $q$ with respect to a common dominating measure $\mu$. The total-variation distance is
		\[
		d_{\mathrm{TV}}(P,Q)
		:= \sup_{A\in\mathcal{F}} |P(A)-Q(A)|
		= \frac12\int_{\Omega}|p(x)-q(x)|\,d\mu(x),
		\]
		and the Kullback--Leibler divergence is
		\[
		D_{\mathrm{KL}}(P\|Q)
		:= \int_{\Omega} p(x)\log\!\left(\frac{p(x)}{q(x)}\right)d\mu(x),
		\]
		with the usual convention that $D_{\mathrm{KL}}(P\|Q)=+\infty$ when $P$ is not absolutely continuous with respect to $Q$.
		
		A limitation of these divergences is that they compare mass \emph{at the same location}. When supports do not overlap, the KL divergence may become infinite, and total variation may saturate, even if the two distributions are close in a geometric sense. This motivates transport-based distances such as the $p$-Wasserstein distance,
		\begin{equation}\label{eq:wp_def}
			W_p(P,Q)
			:= \left(
			\inf_{\pi\in\Pi(P,Q)}
			\int_{\Omega\times\Omega}\|x-y\|^p\,d\pi(x,y)
			\right)^{1/p},
		\end{equation}
		where $\Pi(P,Q)$ denotes the set of couplings of $P$ and $Q$.
		
		Wasserstein distance incorporates geometry by asking how far mass must be transported to transform one distribution into the other. But it is still \emph{decision blind}. In linear optimization, moving a cost vector within the same normal cone may have no effect on the optimizer at all, whereas a tiny perturbation across a cone boundary may change the decision discontinuously. From a decision-making perspective, these two perturbations should be treated very differently, even if their Euclidean size is reversed.
		
		\begin{figure}[H]
			\centering
			\includegraphics[width = 0.48\linewidth]{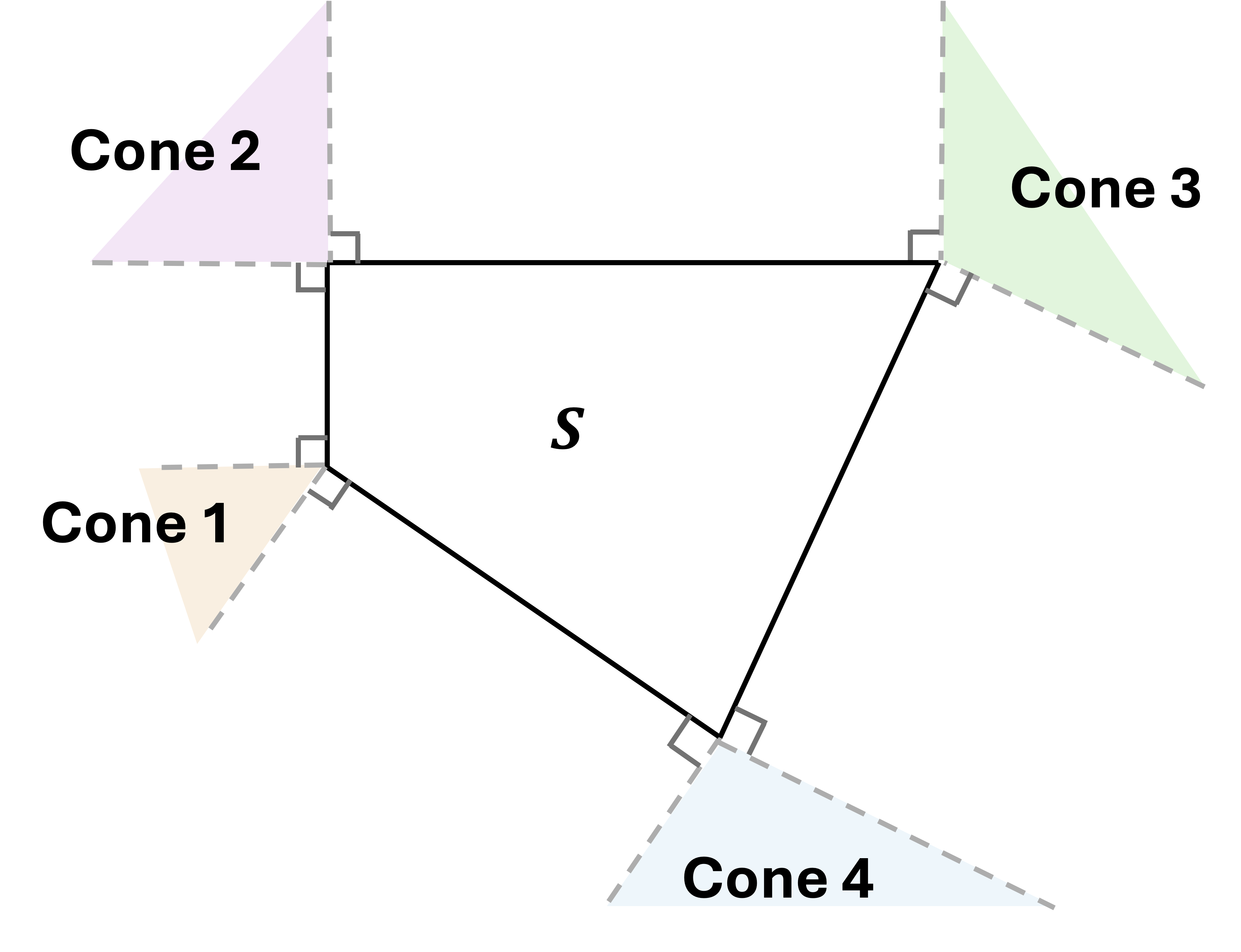}
			\includegraphics[width=0.48\linewidth]{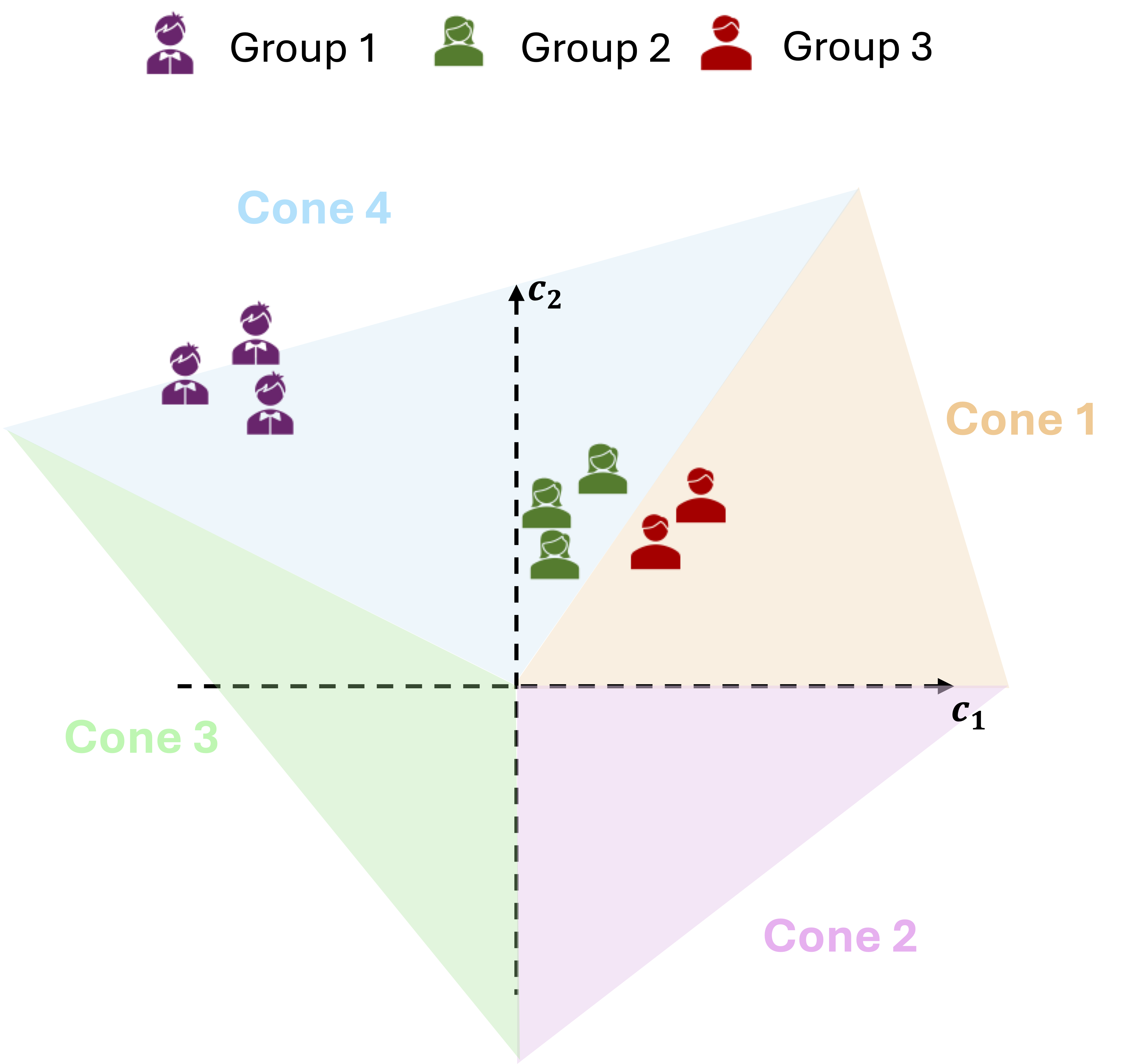}
			\caption{\textbf{Geometric similarity does not imply decision similarity.} \textbf{Left:} A polyhedral feasible region (S) with four extreme points and their associated normal cones. \textbf{Right:} Three example groups of cost vectors. Groups~1 and~2 lie in the same normal cone and are decision-equivalent, whereas Group~3 lies in a different normal cone and induces a different optimizer.}
			\label{fig:intro_customer}
		\end{figure}
		
		Figure~\ref{fig:intro_customer} illustrates this geometry in two dimensions. Let $S\subset\bbR^2$ be a polyhedral feasible region with extreme points $w_1,\dots,w_4$. The cost space can then be partitioned into four regions according to which extreme point is optimal, as shown in the right panel of Figure~\ref{fig:intro_customer}. Consequently, if two cost vectors $x$ and $y$ belong to the same cone, then $w^*(x)=w^*(y)$ and the decision loss $\ld(x,y)$ is zero. A decision-focused notion of distance should therefore regard distributions supported in the same cone as close---or even identical from the standpoint of optimization---regardless of their Euclidean separation.
		
		This observation already reveals a fundamental distinction from Wasserstein geometry. Groups~1 and~2 in Figure~\ref{fig:intro_customer} are geometrically separated but decision-equivalent because they lie in the same normal cone. By contrast, Groups~2 and~3 may be geometrically close, yet they induce different decisions because they lie in different normal cones. The use of downstream decision cost to evaluate distributional distance was first studied by \citet{bertsimas2023optimization}, whose focus is scenario reduction for the SAA method. The idea of an optimal-transport-based decision-focused divergence was later considered by \citet{rodriguez2024right}, primarily from a computational perspective. This divergence was subsequently formalized and rigorously analyzed for linear programs by \citet{liu2026decision}, where it is called the \textit{decision-focused optimistic distance}.
		
		\subsection{Decision-focused optimistic distance}
		
		Let $\mu$ and $\nu$ be probability measures on $\bbR^d$ with finite first moments, and let $\Gamma(\mu,\nu)$ denote the set of all couplings of $\mu$ and $\nu$. For any coupling $\gamma\in\Gamma(\mu,\nu)$, define the \emph{decision-focused divergence}
		\begin{equation}\label{eq:df_divergence}
			\wspo(\mu,\nu;\gamma)
			:= \int_{\bbR^d\times\bbR^d}\ld(x,y)\,\gamma(dx,dy).
		\end{equation}
		Because $\ld(x,y)\le \operatorname{diam}(S)\|x-y\|_*$, the integral in \eqref{eq:df_divergence} is well defined whenever $\mu$ and $\nu$ have finite first moments.
		
		The quantity \eqref{eq:df_divergence} depends not only on the marginals but also on the coupling. This dependence is meaningful. If $X\sim\mu$ and $Y\sim\nu$ are independent, then $\gamma=\mu\otimes\nu$ and \eqref{eq:df_divergence} reduces to the familiar expected regret
		\begin{equation}\label{eq:independent_regret}
			R(\mu,\nu)
			:= \bE\bigl[\ld(X,Y)\bigr]
			= \wspo(\mu,\nu;\mu\otimes\nu).
		\end{equation}
		However, in many applications, independence is unrealistic. If the two distributions describe the same unit observed under different conditions---for instance, the same patient at different ages or the same market before and after a shift---then the way mass is aligned across the two distributions matters. Ignoring this dependence can substantially overestimate or underestimate average decision loss.
		
		This motivates the following definition of decision-focused divergence via optimal transport.
		
		\begin{definition}[Decision-focused optimistic divergence]\label{def:df_dist}
			For probability measures $\mu$ and $\nu$ on $\bbR^d$, define
			\begin{align}
				\wdfo(\mu,\nu)
				&:= \inf_{\gamma\in\Gamma(\mu,\nu)} \wspo(\mu,\nu;\gamma).
				\label{eq:wdf_optimistic}
			\end{align}
		\end{definition}
		This divergence, proposed by Suhan Liu and Mo Liu, is called the \emph{decision-focused optimistic divergence} because it is defined through the coupling that minimizes the decision loss between two distributions. In some OR settings, the maximum decision loss is also of interest as a pessimistic assessment of risk. For this reason, \citet{liu2026decision} also define the \emph{decision-focused pessimistic divergence} by
		\[
		\wdfr(\mu,\nu)
		:= \sup_{\gamma\in\Gamma(\mu,\nu)} \wspo(\mu,\nu;\gamma).
		\]
		Since the independent coupling $\mu\otimes\nu$ belongs to $\Gamma(\mu,\nu)$, we always have
		\[
		0\le \wdfo(\mu,\nu)
		\le R(\mu,\nu)
		\le \wdfr(\mu,\nu).
		\]
		The optimistic version gives the most favorable alignment between the two distributions, whereas the pessimistic version gives the least favorable one. Both quantities are generally asymmetric, because evaluating decisions induced by $\mu$ under $\nu$ is different from evaluating decisions induced by $\nu$ under $\mu$.
		
		When symmetry is desirable, one may use standard symmetrizations such as
		\[
		W^O_{\mathrm{sym}}(\mu,\nu)
		:=\wdfo(\mu,\nu)+\wdfo(\nu,\mu),
		\qquad
		W^R_{\mathrm{sym}}(\mu,\nu)
		:=\wdfr(\mu,\nu)+\wdfr(\nu,\mu),
		\]
		or Jensen--Shannon-type variants based on the midpoint mixture $m:=(\mu+\nu)/2$. In many applications, however, the asymmetric form is precisely the quantity of interest because it corresponds to evaluating one distribution through decisions trained on another.
		\citet{liu2026decision} show that the \textit{decision-focused optimistic divergence} $\wdfo(\cdot,\cdot)$ has the following favorable properties:
		
		\begin{enumerate}
			\item \textbf{Computational efficiency.} The quantity \(\wdfo(\mu,\nu)\) can be computed efficiently through a transformation of the \(W_2\) distance between \((\bar w^*)_{\#}\mu\) and \(\nu\), where \((\bar w^*)_{\#}\mu\) denotes the pushforward of \(\mu\) onto the set of optimal extreme points.
			
			\item \textbf{Statistical efficiency.} The estimation error rate for the \textit{decision-focused optimistic divergence} is independent of the dimension of the measure space. By contrast, for many classical distributional distances, the estimation error typically decays at the rate \(O(n^{-1/d})\), where \(d\) is the dimension of the measure space. The \textit{decision-focused optimistic divergence} avoids this curse of dimensionality by exploiting the finite set of extreme points in linear programming.
			
			\item \textbf{Control by classical metrics.} The quantity \(\wdfo(\mu,\nu)\) is Lipschitz continuous with respect to total variation distance and the \(W_1\) distance, and is further controlled by the \(W_2\) distance after pushforward to the decision space.
		\end{enumerate}
		
		\citet{liu2026decision} also consider a regularized version of the \textit{decision-focused optimistic divergence} by adding a KL-divergence penalty to the coupling problem. The weight of this penalty controls the degree of pessimism in the resulting coupling. See Proposition~4 of \citet{liu2026decision} for details.

		\subsection{Applications to interpolation and clustering}

		In the DFL setting, the \textit{decision-focused optimistic divergence} can naturally replace the Wasserstein 2 distance in various optimal transport problems; see, e.g., \citet{cheng2026generative}. The resulting decision-focused coupling has a wide range of potential applications in DFL. Here, we use clustering and interpolation as two illustrative examples.
		
		\paragraph{Decision-aware clustering.}
		Similarity between coefficient distributions does not necessarily imply similarity between the induced optimal decisions. Therefore, when clustering samples, for example, to provide the same service to similar customers and reduce the cost of personalization (or customization), the clustering criterion should account for downstream decisions rather than distributional similarity alone.

		We use the newsvendor problem as a motivating example. Suppose each observational unit is represented not by a single point, but by an entire distribution, such as a customer-specific demand distribution, a patient-specific response distribution, or a city-level sales distribution. Standard clustering methods based on Euclidean or Wasserstein distance group samples that are geometrically similar. In contrast, decision-focused clustering groups samples according to the similarity of their induced optimal actions.
		
		\begin{figure}[t]
			\centering
			\includegraphics[width=0.5\linewidth]{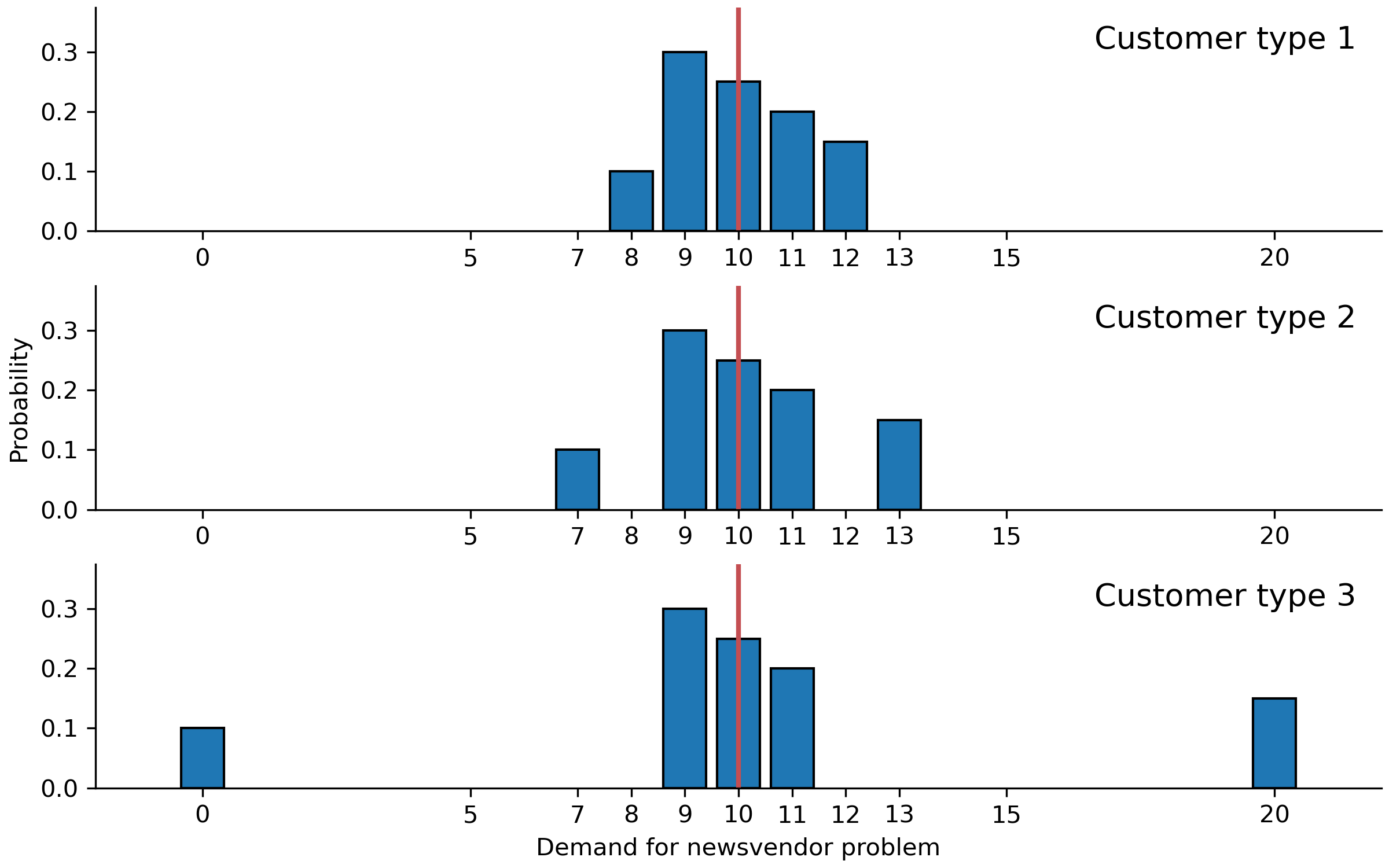}
			\caption{Three customer types in a newsvendor problem. The red line marks the same decision-optimal quantile for all three distributions. Although the distributions differ in shape and tail behavior, they induce the same optimal order quantity, equal to 10.}
			\label{fig:newsvendor_same_quantile}
		\end{figure}
		
		Figure~\ref{fig:newsvendor_same_quantile} presents a simple newsvendor example. The three customer types have visibly different demand distributions, and standard distributional distances would treat them as distinct. However, suppose their unit selling price $p$ and unit procurement cost $c$ are the same across the three customer types. Then all three distributions share the same decision-relevant quantile, \(F^{-1}\!\left(\frac{p-c}{p}\right)\). In particular, when $p$ and $c$ are chosen such that $(p-c)/p=0.7$, this corresponds to the $70\%$ quantile, and the three customer types yield the same optimal order quantity, namely 10. From a decision-focused perspective, these distributions should be regarded as similar. This is exactly the type of structure that a decision-focused distance is designed to preserve.

		This viewpoint leads naturally to decision-aware barycenters and clustering rules. For example, given distributions $\mu_1,\dots,\mu_n$ and kernel weights $\omega_i(x)$ centered at a target covariate value $x$, one may define a decision-aware interpolant through
		\[
		\hat\mu_x\in\argmin_{\nu}\sum_{i=1}^n \omega_i(x)\,W^O_{\mathrm{sym}}(\mu_i,\nu),
		\]
		while a decision-aware clustering rule can be based on medoids or barycenters that minimize within-cluster DF distance. The precise optimization problem depends on the application, but the common principle is simple: group or interpolate distributions according to \emph{decision similarity}, not merely raw geometric proximity.
		
		\paragraph{Decision-aware interpolation.}
		Interpolation poses a related challenge. Suppose distributions are indexed by a covariate such as income, age, or time, and we wish to estimate the distribution corresponding to an intermediate covariate value. A naive strategy averages empirical histograms pointwise. When supports are separated, however, this can create unrealistic intermediate distributions.
		
		\begin{figure}[htbp]
			\centering
			\includegraphics[width=0.5\linewidth]{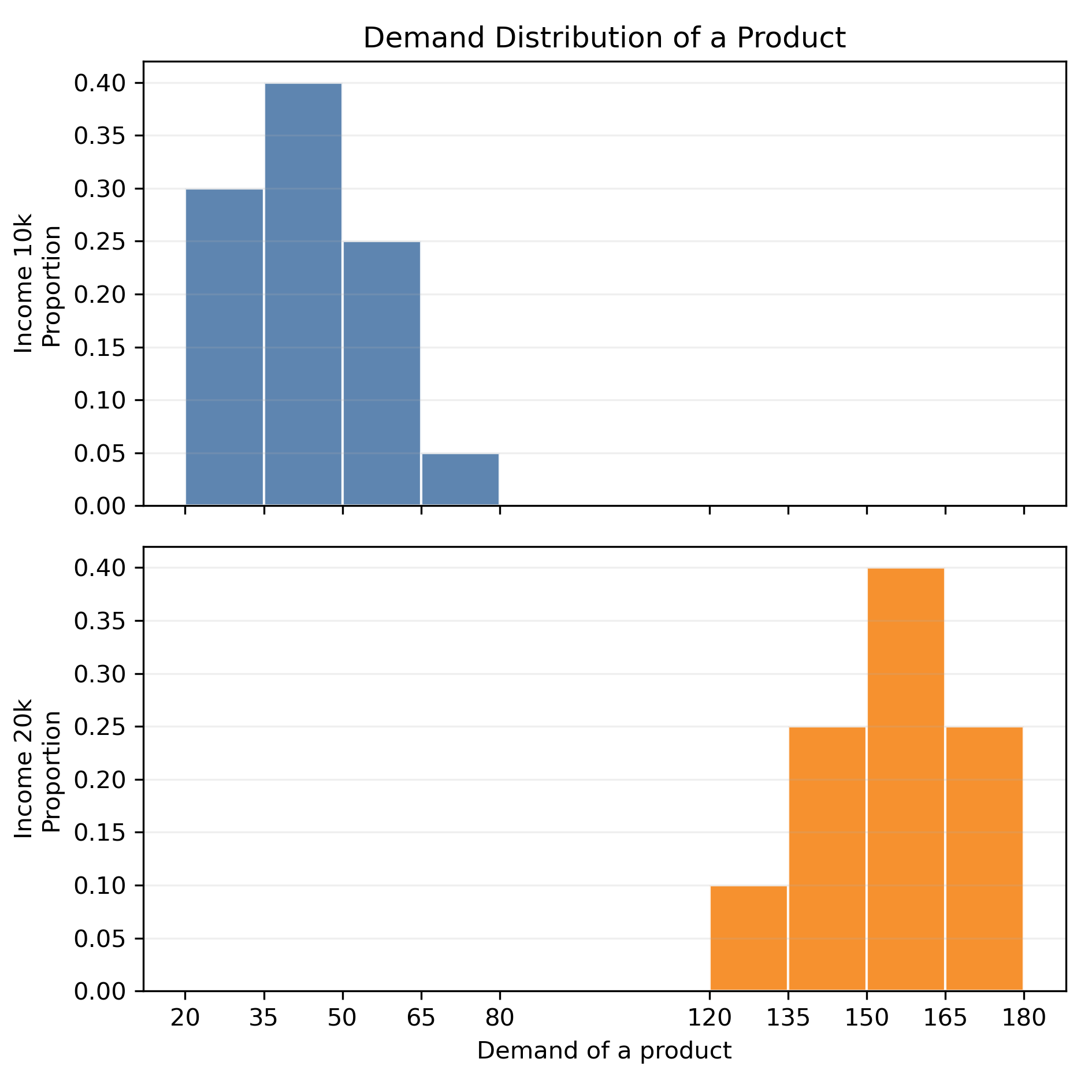}
			\caption{Demand distributions of the same product in two endpoint markets. The supports are largely separated, suggesting that interpolation should move mass rather than simply average pointwise frequencies.}
			\label{fig:demand_distribution_income_comparison}
		\end{figure}
		
		\begin{figure}[ht]
			\centering
			\includegraphics[width=0.48\linewidth]{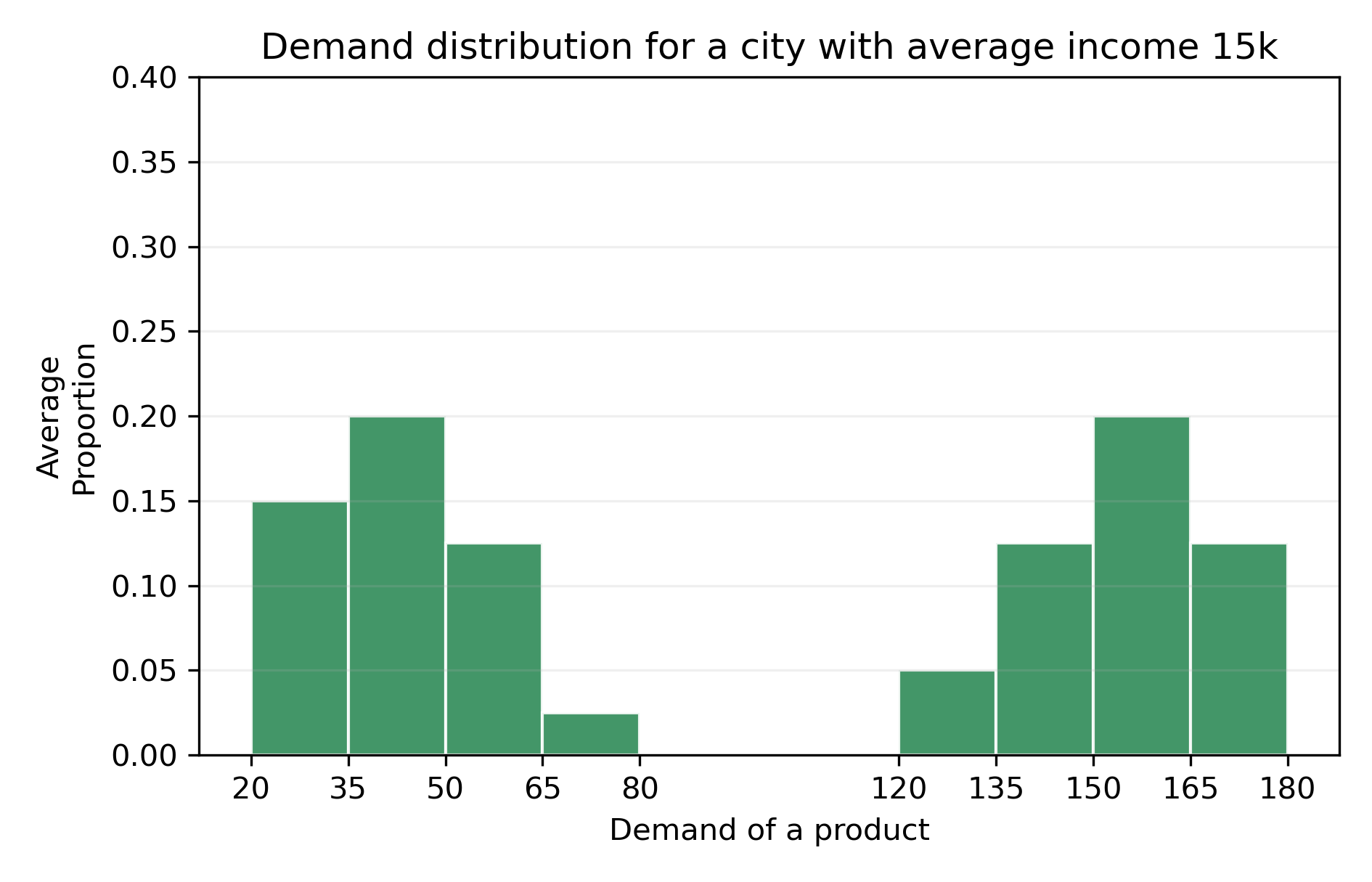}
			\includegraphics[width=0.48\linewidth]{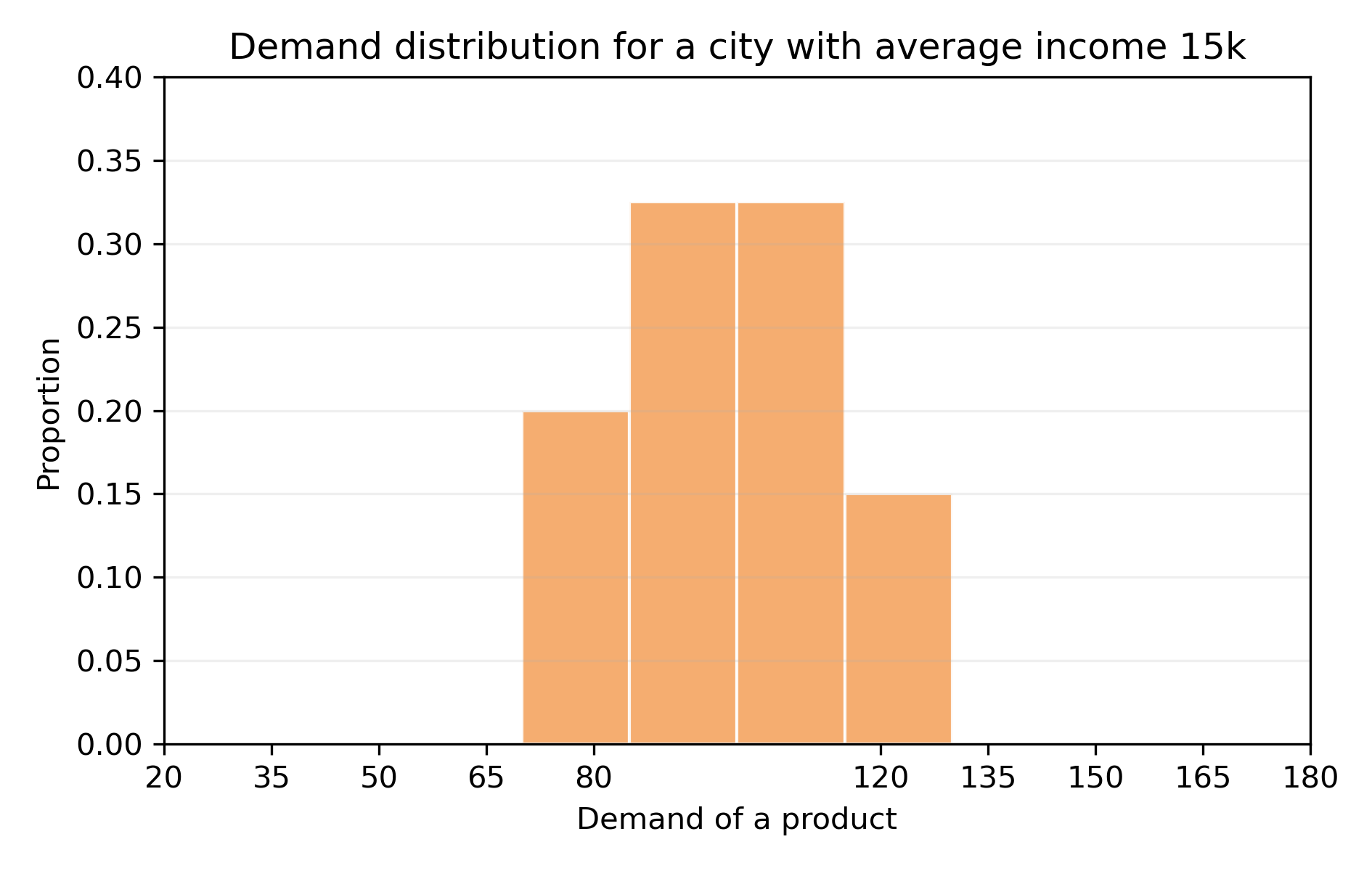}
			\caption{Two candidate interpolations for an intermediate market. \textbf{Left:} naive averaging of endpoint histograms produces an artificial bimodal distribution. \textbf{Right:} a transport- or decision-aware interpolation yields a smoother intermediate distribution that better reflects a gradual demand shift.}
			\label{fig:distribution_interpolation}
		\end{figure}
		
		Figures~\ref{fig:demand_distribution_income_comparison} and \ref{fig:distribution_interpolation} illustrate the point. When the demand distributions in low- and high-income markets have disjoint or weakly overlapping supports, pointwise averaging yields the bimodal shape in Figure~\ref{fig:distribution_interpolation} (left), which is often not operationally plausible. In contrast, transport-based interpolation moves mass from one region to another and produces the unimodal intermediate distribution in Figure~\ref{fig:distribution_interpolation} (right). A decision-focused interpolation goes one step further: it favors interpolants that preserve the downstream action, such as relevant quantiles, optimal stocking decisions, or optimizer cones.

		Interpolation induced by optimal transport is often referred to as McCann interpolation in statistical learning. In the DFL setting, \citet{liu2026decision} apply this idea using the \textit{decision-focused optimistic divergence} and call the resulting construction \emph{decision-focused interpolation}. Empirical results on real datasets in \citet{liu2026decision} show that this decision-focused interpolation produces distribution forecasts with lower decision cost than decision-blind interpolation methods.
		
		\section{Open Statistical Learning Questions in DFL}
		
		Tailoring traditional statistical learning methods to DFL remains an important and active research direction. Existing work has already begun to address several foundational questions, including statistical learning rates \citet{elbalghiti2023generalization,hu2022fast}, robust optimization \citet{im2025smart}, learning under bandit feedback \citet{hu2025contextual}, data collection \citet{liu2023active,bennouna2025data}, sequential experimental design \citet{wan2026directional}, uncertainty quantification \citet{yeh2025conformal}, and distributional distance quantification \citet{liu2026decision}. These works suggest that many classical learning questions can be revisited through a decision-focused lens, often leading to new phenomena that do not appear in prediction-focused learning.
		
		At the same time, a broad range of statistical learning problems remains largely unexplored in DFL. Examples include Value-at-Risk (VaR) estimation, offline policy evaluation, denoising, missing-feature completion, clustering, and feature selection. In many of these settings, the key challenge is to identify which aspects of uncertainty matter for downstream decisions and which do not. More generally, an important question is whether one can develop decision-aware versions of classical statistical procedures that retain their computational tractability while achieving stronger downstream guarantees. Another appealing direction is to identify conditions under which faster learning rates can be achieved, for example, under special noise distributions, margin conditions, or structural assumptions on the hypothesis class.
		
		A second important direction is to broaden the range of real-world applications. DFL is naturally motivated by problems in service system design, pricing, personalized recommendation, delivery and logistics, energy systems, healthcare, and hospital operations. Many of these applications involve nonlinear objectives, uncertain constraints, or multi-stage uncertainty, and thus fall outside the clean stochastic linear optimization framework emphasized in this tutorial. Nevertheless, when suitable approximations or reformulations reduce these problems to linear or locally linear models, the tools developed here may still provide useful insights. For example, \citet{liu2023value} studies assortment optimization through such a perspective. 
		
		More broadly, going beyond stochastic linear optimization remains a central challenge for the field. Linear optimization provides a particularly tractable setting because the conditional mean is decision sufficient and the geometry of the oracle map is relatively explicit. For nonlinear objectives, however, the relevant decision statistic may be much more complicated, and in many problems, no low-dimensional point prediction is sufficient. Developing statistical theory for such settings, including uncertainty measures and notions of calibration, is an important frontier. 
		
		Finally, most existing analyses of DFL focus on static, single-period problems, partly because many basic statistical questions remain open even in this setting. Extending these ideas to multiperiod decision-making is a natural next step. Once temporal dependence and sequential feedback are introduced, DFL becomes closely connected to reinforcement learning, dynamic programming, and control; see, e.g., \citet{liu2022online,capitaine2026online}. This connection raises new questions about exploration, partial feedback, state uncertainty, and long run regret, while also creating opportunities to bring decision-focused ideas into broader sequential learning problems.

		\section{Conclusion}
		
		In this tutorial, we reviewed the main tools, challenges, and methods in DFL. We used data collection and distributional distance quantification as illustrative examples to show how traditional statistical learning tools can be adapted to the decision-focused setting. Overall, DFL remains at an early stage of development. The main message of this tutorial is that many classical tools from statistical learning continue to be useful, but they often must be redefined, reanalyzed, or redesigned once the ultimate goal is decision quality rather than prediction accuracy. Developing a more systematic understanding of this gap, both theoretically and computationally, remains one of the most promising directions for future research.
		%
		%
		%
		
		\ACKNOWLEDGMENT{Mo Liu gratefully acknowledges Erick Delage, Tito Homem-de-Mello, and Vishal Gupta for their valuable feedback. He also thanks the editors and four anonymous reviewers for their helpful comments and suggestions.}

		
		
		\bibliographystyle{informs2014} 
		\bibliography{reference} 

	\end{document}